\documentclass[final,5p,times,twocolumn]{elsarticle}
\usepackage{amsmath,amssymb,mathtools,bm,amsthm}
\usepackage{booktabs,tabularx,array}
\usepackage{siunitx}
\usepackage{xcolor}
\usepackage{microtype}
\microtypesetup{protrusion=false}
\usepackage{enumitem}
\usepackage{graphicx}
\usepackage{tikz}
\usepackage{pgfplots}
\usepackage{flushend}
\usepackage{needspace}
\usepackage{placeins}
\usepackage{dblfloatfix}
\usepackage{hyperref}

\usetikzlibrary{arrows.meta,calc,positioning}
\pgfplotsset{compat=1.18}
\newcommand{\vect}[1]{\bm{#1}}
\newcommand{\mat}[1]{\bm{#1}}
\newcommand{\R}{\mathbb{R}}
\newcommand{\trans}{^{\mathsf T}}
\newcommand{\norm}[1]{\left\lVert #1\right\rVert}
\newcommand{\abs}[1]{\left\lvert #1\right\rvert}
\newcommand{\diag}{\operatorname{diag}}
\newcommand{\rank}{\operatorname{rank}}
\newcommand{\nullsp}{\operatorname{ker}}

\newcommand{\dd}{\,\mathrm{d}}
\newcommand{\wrap}{\operatorname{wrap}_{[-\pi,\pi)}}
\newtheorem{definition}{Definition}
\newtheorem{assumption}{Assumption}
\newtheorem{lemma}{Lemma}
\newtheorem{remark}{Remark}
\theoremstyle{plain}
\newtheorem{theorem}{Theorem}

\biboptions{numbers,sort&compress}
\hypersetup{hidelinks}
\journal{Mechanism and Machine Theory}
\date{}

\begin{document}

\begin{frontmatter}

\title{Prescribed-Time Contracting-Boundary Control of a Tendon-Driven Flexible Arm}

\author[1]{Yi Lu}
\ead{yilu84007@gmail.com}

\author[2]{Chao Tang}
\ead{tangchao@ouc.edu.cn}

\author[2]{Zhiji Han \corref{cor}}
\ead{hanzhiji@ouc.edu.cn}

\author[2]{Hongdu Wang}
\ead{wanghongdu@ouc.edu.cn}

\cortext[cor]{Corresponding author.}

\affiliation[1]{organization={School of Information Management, Hubei University of Economics},
	addressline={No. 8 Yangqiaohu Road, Canglongdao Development Park, Jiangxia District}, 
	city={Wuhan},
	postcode={430205}, 
	country={China}}

\affiliation[2]{organization={College of Engineering, Ocean University of China},
	addressline={No. 1299 Sansha Road, Qingdao West Coast New Area}, 
	city={Qingdao},
	postcode={266404}, 
	country={China}}

\begin{abstract}
    This study develops a prescribed-time performance-shaping control method for curvature tracking of a single-segment flexible arm actuated by three antagonistic tendon pairs. A Cartesian curvature representation is introduced to avoid the undefined bending direction at the straight configuration and to establish an explicit six-tendon kinematic mapping. A cubic performance boundary contracts smoothly from an initially admissible error bound to a nonzero terminal accuracy bound within a prescribed time. Based on this boundary, a dual transformation combining static symmetric error scaling and time-varying behavior shaping maps the tracking error into a fixed unit box. The resulting controller guarantees boundary invariance, prescribed-time entry into the terminal accuracy region, and subsequent asymptotic convergence. Numerical evaluations with Python and OpenCR--MuJoCo, together with a supervised reduced-order experiment on a two-section, four-channel platform, provide complementary validation. Across six experimental trials, no violation of the prescribed boundary is observed, and the proposed controller reduces the mean terminal curvature RMSE by 32.5\% relative to a matched baseline, with comparable terminal-band entry times. These results support the feasibility of the proposed approach in the reduced-order experimental setting.
\end{abstract}

\begin{keyword}
    Flexible arm \sep Tendon-driven actuation \sep Curvature tracking \sep Performance shaping \sep Prescribed performance.
\end{keyword}

\end{frontmatter}

\section{Introduction}
\label{sec:introduction}
Tendon-driven flexible arms have demonstrated their capability in a range of applications, particularly where slenderness, compliance, and distal accessibility are important \cite{robinson1999continuum,hannan2003elephant,walker2013continuum}. Representative scenarios include time-critical handling and manipulation, assistive operations for field and space applications, and inspection and
maintenance in confined spaces such as aero-engine passages, pipes, ducts, and equipment cavities \cite{burgnerkahrs2015survey,dupont2022medical,song2026bronchoscope}. 

The modeling and control of such systems remain challenging due to the intrinsic compliance and deformation characteristics of flexible mechanisms
\cite{trivedi2008softrobotics,rus2015softrobots}. Detailed mechanical models have been developed to capture the dynamic behavior of flexible robotic systems
\cite{gravagne2003dynamics,renda2012steadystate,renda2014dynamic,
till2019cosserat,dellasantina2020feedback,katzschmann2019motion,
thuruthel2018control}. The first technical challenge concerns the choice of shape representation. Under the constant curvature assumption, curvature magnitude and bending direction are widely adopted to describe the configuration of flexible arms
\cite{jones2006multisection,jones2006realtime,webster2010review,
dellasantina2020parametrization}. However, when the curvature magnitude vanishes, the bending direction becomes undefined, even though the corresponding straight configuration remains physically regular. The Cartesian curvature vector provides a unique representation of this state with a zero vector, enabling reference generation, estimation, and feedback to pass continuously through the straight configuration without coordinate singularities. The second technical challenge arises from tendon actuation. Tendon routing geometry and coupled actuation effects lead to a nonlinear tendon-to-shape relationship
\cite{camarillo2008mechanics,camarillo2009tracking,rucker2011routing,
rao2021benchmark,yuan2019friction}. In the symmetric architecture considered here, six tendon shortenings correspond to only two bending coordinates, resulting in a redundant actuator space. Tendon friction and tension loss can further decouple the motor-side tendon displacement from the actual deformation of a tendon-driven continuum arm
\cite{liu2021tensionloss}. Consequently, curvature computed from tendon displacement through a kinematic model should not be treated as an independent measurement of the arm shape. This distinction is important for experimental validation, since using tendon-derived curvature as a reference measurement would not provide an independent assessment of the actual shape tracking performance. These two challenges motivate a shape representation that remains regular through the straight configuration and a compatible tendon-level formulation that explicitly accounts for actuation redundancy.

Flexible arms are increasingly expected to operate under simultaneous constraints on task completion time, transient behavior, and terminal accuracy. Delayed arrival may cause a task window to be missed, excessive transient deviation may interfere with surrounding objects or operators, and persistent terminal error may prevent reliable alignment. These requirements have motivated prescribed performance control methods, which shape transient and terminal tracking behavior through performance functions and error transformations
\cite{bechlioulis2008ppc,bechlioulis2009bounds,Huang2019GivenPerformance}.
Related barrier Lyapunov function methods instead provide systematic means for enforcing constant or time-varying bounds on system outputs and states
\cite{tee2009blfoutput,tee2011timevarying,liu2016fullstate,liu2017nussbaum,
liu2018stochastic,he2017manipulator}. For tendon-driven flexible arms, however, achieving such prescribed tracking behavior also requires a suitable shape representation and an explicit mapping from the desired shape evolution to the redundant tendon actuators. These requirements are particularly challenging when the shape representation must remain regular through the straight configuration and the actuator commands must be consistent with the multi-tendon kinematics.

This study develops a prescribed-time contracting-boundary control method for curvature-vector tracking, including continuous passage through the straight configuration. A cubic performance boundary is constructed to contract from an initially admissible error bound to a nonzero terminal accuracy bound within a prescribed time and to remain constant thereafter. Based on this boundary, a dual transformation combines static symmetric error scaling with time-varying behavior shaping to map the curvature-tracking error into a fixed unit box. Under the stated ideal kinematic assumptions and initial feasibility conditions, the proposed method establishes boundary invariance, prescribed-time entry into the terminal accuracy band, and subsequent asymptotic convergence. The main contributions are summarized as follows.
\begin{enumerate}[label=(\arabic*)]
  \item A Cartesian curvature vector representation is introduced to resolve the undefined bending direction at zero curvature in the conventional curvature magnitude and bending direction formulation. Based on this representation, a constant curvature model is established for the six-tendon architecture, together with the mappings between tendon variables and curvature. A reference curvature velocity and a compatible minimum-norm allocation of the six tendon velocities are then derived to realize the desired curvature evolution through the redundant tendon actuation, with all mappings remaining well defined through the straight configuration.
  \item A prescribed-time contracting-boundary control law is proposed. A cubic performance boundary first contracts the admissible curvature-tracking error from an initial bound to a nonzero terminal accuracy bound within a prescribed time and remains constant thereafter. A dual transformation based on static symmetric error scaling and time-varying behavior shaping then maps the time-varying error constraint into a fixed unit box. Under the stated assumptions, boundary invariance, prescribed-time entry into the terminal accuracy band, and subsequent asymptotic convergence are established.
  \item A hierarchical validation framework combining numerical simulations, an application-inspired contact-free branch-alignment scenario, and a supervised reduced-order physical experiment is developed. The hardware evaluation verifies one signed curvature component on an available four-channel platform and is intended as a feasibility study rather than full six-tendon validation.
\end{enumerate}

The remainder of this paper is organized as follows. Section~\ref{sec:preliminaries} introduces the preliminaries for performance shaping, followed by the single-segment six-tendon kinematic model developed in Section~\ref{sec:kinematics}. The proposed prescribed-time contracting-boundary controller and the corresponding closed-loop analysis are presented in Section~\ref{sec:control}. Section~\ref{sec:numerical-simulation} then evaluates the proposed approach through numerical studies and a reduced-order physical experiment. Finally, Section~\ref{sec:conclusions} concludes the paper.

\textbf{Notations:} Bold symbols denote vectors or matrices. The superscript $\mathsf T$ denotes the transpose. $\|\cdot\|_2$ denotes the Euclidean norm for vectors and the induced spectral norm for matrices. For scalars, $\abs{\cdot}$ denotes the absolute value, and $\diag(\cdot)$ forms a diagonal matrix. $\R$ is the real number field, $\mat I_n$ denotes the $n\times n$ identity matrix, and $\mathrm{SO}(2)$ is the group of planar proper rotation matrices. The operators $\operatorname{blkdiag}(\cdot)$, $\operatorname{range}(\cdot)$, $\ker(\cdot)$, and $\operatorname{rank}(\cdot)$ denote block-diagonal construction, matrix range, matrix kernel, and matrix rank, respectively.

\section{Preliminaries}
\label{sec:preliminaries}
This section introduces the technical tools underlying the proposed prescribed-time contracting-boundary controller. First, a cubic performance boundary is constructed to specify the admissible curvature-tracking error and contract it to a nonzero terminal accuracy bound within a prescribed time. The resulting time-varying constraint is then transformed into a fixed unit interval through two successive steps: static symmetric scaling normalizes the error, and time-varying behavior shaping maps the time-varying constraint into a fixed normalized constraint. Finally, a logarithmic barrier Lyapunov function is introduced for the subsequent boundary-invariance analysis.

\begin{definition}[Prescribed-time contracting performance boundary]
    For each $j\in\{x,y\}$, let $e_j(t)$ denote the $j$th component of the curvature tracking error. Define the cubic shaping polynomial
    \begin{equation}
        h(\tau)=1 - 3\tau^2 + 2\tau^3, \quad \tau\in[0,1],
    \end{equation}
    and the prescribed-time performance boundary
    \begin{equation} \label{eq:performance_envelope}
        B_j(t) = 
        \begin{cases}
            \varepsilon_j + (B_{j0} - \varepsilon_j) h(t/T_j), & 0 \le t \le T_j,\\
            \varepsilon_j, & t>T_j.
        \end{cases}
    \end{equation}
    The design parameters satisfy
    \begin{equation} \label{eq:envelope_conditions}
        B_{j0}>\abs{e_j(0)},\qquad
        B_{j0}>\varepsilon_j>0,\qquad
        T_j>0.
    \end{equation}
    The boundary contracts from the initial bound $B_{j0}$ to the terminal bound $\varepsilon_j$ over the prescribed interval $[0,T_j]$ and remains constant thereafter. The associated behavior-shaping multiplier is defined as
    \begin{equation}
        \beta_j(t) = \frac{B_{j0}}{B_j(t)}.
    \end{equation}
\end{definition}

\begin{lemma}[Properties of the prescribed-time contracting boundary] \label{lem:boundary_properties}
    For each $j\in\{x,y\}$, the performance boundary $B_j$ and the associated behavior-shaping multiplier $\beta_j$ are positive and continuously differentiable. The cubic shaping polynomial $h(\tau)$ satisfies
    \begin{equation}
        h(0)=1, \quad h(1)=0,
    \end{equation}
    and is strictly decreasing on $[0,1]$. Consequently,
    \begin{subequations} \label{eq:envelope_endpoints}
        \begin{gather}
            B_j(0)=B_{j0}, \quad B_j(T_j)=\varepsilon_j,\label{eq:envelope_endpoint_values}\\
            \dot B_j(0)=0, \quad \dot B_j(T_j)=0,\label{eq:envelope_endpoint_rates}\\
            \varepsilon_j\le B_j(t)\le B_{j0},\label{eq:envelope_bounds}\\
            B_j(t)=\varepsilon_j,\quad t\ge T_j. \label{eq:envelope_terminal_value}
        \end{gather}
    \end{subequations}
    Moreover, for $0<t<T_j$,
    \begin{subequations} \label{eq:envelope_derivative}
        \begin{gather}
            \dot B_j(t)=-\frac{6(B_{j0}-\varepsilon_j)t(T_j-t)}{T_j^3}<0,\label{eq:envelope_rate}\\
            \max_{0\le t\le T_j}\abs{\dot B_j(t)}=\frac{3(B_{j0}-\varepsilon_j)}{2T_j}.\label{eq:envelope_max_rate}
        \end{gather}
    \end{subequations}
    Hence, $B_j(t)$ contracts strictly from $B_{j0}$ to $\varepsilon_j$ over $[0,T_j]$ and remains constant thereafter. The associated behavior-shaping multiplier satisfies
    \begin{subequations} \label{eq:beta_derivative}
        \begin{gather}
            1\le\beta_j(t)\le\frac{B_{j0}}{\varepsilon_j},\label{eq:beta_bounds}\\
            \dot\beta_j(t)=-\frac{B_{j0}\dot B_j(t)}{B_j^2(t)}\ge0.\label{eq:beta_rate}
        \end{gather}
    \end{subequations}
    Thus, $\beta_j(t)$ is nondecreasing on $[0,T_j]$ and remains constant for $t\ge T_j$.
\end{lemma}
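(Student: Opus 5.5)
The plan is direct computation from the definitions, organized around the polynomial $h$. First I would record the elementary facts about $h$: evaluating gives $h(0)=1$ and $h(1)=0$, and differentiating gives $h'(\tau)=-6\tau+6\tau^2=-6\tau(1-\tau)$. Hence $h'(0)=h'(1)=0$ and $h'(\tau)<0$ on $(0,1)$, so $h$ is strictly decreasing on $[0,1]$ and satisfies $0\le h\le 1$ there. Maximizing $\tau(1-\tau)$ at $\tau=1/2$ gives $\max_{[0,1]}\abs{h'}=3/2$.

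Next I would transfer these facts to $B_j$ through the chain rule. On $[0,T_j]$ we have $\dot B_j(t)=(B_{j0}-\varepsilon_j)h'(t/T_j)/T_j$. Substituting $h'$ yields \eqref{eq:envelope_rate}, whose sign is negative because $B_{j0}>\varepsilon_j$ by \eqref{eq:envelope_conditions}. The maximal rate \eqref{eq:envelope_max_rate} follows from $\max\abs{h'}=3/2$. The endpoint values \eqref{eq:envelope_endpoint_values} and the bounds \eqref{eq:envelope_bounds} follow from $h(0)=1$, $h(1)=0$ and $0\le h\le1$. The terminal statement \eqref{eq:envelope_terminal_value} holds by definition for $t>T_j$ and by $h(1)=0$ at $t=T_j$. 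Positivity of $B_j$ then follows from $B_j\ge\varepsilon_j>0$.

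The step needing care is continuous differentiability at the junction $t=T_j$, which I expect to be the only real obstacle. Both branches take the value $\varepsilon_j$ there, so $B_j$ is continuous. The left derivative is $(B_{j0}-\varepsilon_j)h'(1)/T_j=0$, and the right branch is constant, so its derivative is $0$. Since $h'$ is continuous and $h'(1)=0$, $\dot B_j$ is continuous across $T_j$, giving $B_j\in C^1$. At $t=0$ the function is only considered on $t\ge0$, so $\dot B_j(0)$ is a one-sided derivative equal to $(B_{j0}-\varepsilon_j)h'(0)/T_j=0$. This also establishes \eqref{eq:envelope_endpoint_rates}.

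Finally, for $\beta_j=B_{j0}/B_j$, I would note that it is a quotient of $C^1$ functions with a denominator bounded below by $\varepsilon_j>0$, so it is positive and $C^1$. From $\varepsilon_j\le B_j\le B_{j0}$ we get $1\le\beta_j\le B_{j0}/\varepsilon_j$. The quotient rule gives $\dot\beta_j=-B_{j0}\dot B_j/B_j^2$, which is nonnegative because $\dot B_j\le0$ everywhere: it is negative on $(0,T_j)$ and zero elsewhere. Therefore $\beta_j$ is nondecreasing on $[0,T_j]$ and constant for $t\ge T_j$.
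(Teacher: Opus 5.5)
Your proposal is correct and follows essentially the same route as the paper. Both arguments differentiate $h$ to get $h'(\tau)=-6\tau(1-\tau)$, transfer the sign, endpoint and bound facts to $B_j$ via the chain rule, establish $C^1$ at $t=T_j$ from $h'(1)=0$, and obtain the $\beta_j$ properties from $\varepsilon_j\le B_j\le B_{j0}$ and the quotient rule; the only cosmetic differences are that you maximize $\abs{h'}$ on $[0,1]$ and rescale (the paper maximizes $t(T_j-t)$ directly) and that you explicitly note $\dot B_j(0)$ is a one-sided derivative.
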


\begin{proof}
    Differentiating the cubic polynomial gives
    \begin{equation} \label{eq:proof_h_derivative}
        \frac{\dd h(\tau)}{\dd \tau} = -6\tau + 6\tau^2 = -6\tau (1-\tau).
    \end{equation}
    Hence, $h'(\tau)<0$ for $\tau\in(0,1)$, while $h'(0) = h'(1)=0$. For $0 < t < T_j$, differentiating Eq.~\eqref{eq:performance_envelope} yields
    \begin{equation} \label{eq:proof_boundary_rate}
        \dot B_j(t) = \frac{B_{j0}-\varepsilon_j}{T_j}h'(t/T_j) = -\frac{6(B_{j0}-\varepsilon_j)t(T_j-t)}{T_j^3}.
    \end{equation}
    Since $B_{j0} > \varepsilon_j > 0$, it follows that $\dot B_j(t) < 0$ on $(0, T_j)$. Moreover, $h(0) = 1$ and $h(1) = 0$ give $B_j(0)=B_{j0}$, $B_j(T_j)=\varepsilon_j$, while $h'(0) = h'(1)=0$ gives $\dot B_j(0) = \dot B_j(T_j)=0$. The strict decrease of $B_j$ on $(0,T_j)$ therefore yields Eq.~\eqref{eq:envelope_bounds}. For $t > T_j$, $B_j(t) = \varepsilon_j$ by definition, so Eq.~\eqref{eq:envelope_terminal_value} follows directly. The right derivative at $T_j$ is zero, while Eq.~\eqref{eq:proof_boundary_rate} gives
    \begin{equation}
        \lim_{t\to T_j^-}\dot B_j(t)=0.
    \end{equation}
    Thus, both $B_j$ and its first derivative are continuous at $T_j$, which establishes $B_j \in C^1$. To determine the maximum contraction rate, define $f(t)=t(T_j-t)$. Then
    \begin{equation} \label{eq:proof_contraction_stationarity}
        \dot{f}(t)=T_j-2t.
    \end{equation}
    The unique stationary point on $[0,T_j]$ is $t = T_j/2$, and since $f(0) = f(T_j) = 0$, its maximum is attained at this point. In particular, \label{eq:proof_contraction_maximum}
    \begin{equation}
        f(T_j/2)=\frac{T_j^2}{4}.
    \end{equation}
    Substitution into Eq.~\eqref{eq:proof_boundary_rate} gives Eq.~\eqref{eq:envelope_max_rate}. Finally, Eq.~\eqref{eq:envelope_bounds} together with $\varepsilon_j > 0$ implies that $B_j(t)$ is strictly positive. Hence $\beta_j(t) = B_{j0}/B_j(t)$ is positive and $C^1$. The bounds on $B_j$ immediately yield Eq.~\eqref{eq:beta_bounds}, while differentiation gives
    \begin{equation}
        \dot\beta_j(t) = -\frac{B_{j0}\dot B_j(t)}{B_j^2(t)} \geq 0,
    \end{equation}
    where the inequality follows from $\dot B_j(t)\leq0$. Thus, Eq.~\eqref{eq:beta_rate} holds, completing the proof.
\end{proof}

\begin{remark}
    The performance boundary $B_j(t)$ is $C^1$ but generally not $C^2$ at $T_j$, owing to its cubic construction on $[0,T_j]$. It contracts from the initial bound $B_{j0}$ to the terminal bound $\varepsilon_j$ over the prescribed interval $[0,T_j]$ and remains constant thereafter. This boundary is adopted in the proposed controller to prescribe the transient contraction and terminal accuracy of the tracking error.
\end{remark}

\begin{definition}[Symmetric dual transformation]
    Let $e_j$ denote the $j$th component of the curvature-tracking error for $j\in\{x,y\}$. The proposed dual transformation consists of two successive mappings:
    \begin{subequations} \label{eq:two_step_transform}
        \begin{align}
            e_j & \xrightarrow{\text{static symmetric scaling}} s_j = \frac{e_j}{B_{j0}}, \label{eq:static_symmetric_scaling} \\
            s_j & \xrightarrow{\text{behavior shaping}} \xi_j = \beta_j(t)s_j = \frac{e_j}{B_j(t)}. \label{eq:behavior_multiplier_scaling}
        \end{align}
    \end{subequations}
    The first mapping symmetrically normalizes the curvature-tracking error, while the second introduces the prescribed time variation through the behavior-shaping multiplier. Hence, the dual transformation maps the time-varying admissible error region into a fixed normalized region. In vector form,
    \begin{subequations} \label{eq:normalized_error}
        \begin{gather}
            \mat D_B(t) = \diag\bigl(B_x(t), B_y(t)\bigr), \label{eq:boundary_matrix} \\
            \vect\xi = 
            \begin{bmatrix}
                \xi_x \\
                \xi_y
            \end{bmatrix}
            = \mat D_B^{-1}(t)
            \begin{bmatrix}
                e_x\\
                e_y
            \end{bmatrix}. \label{eq:vector_normalized_error}
        \end{gather}
    \end{subequations}
\end{definition}

\begin{figure*}[htbp]
    \centering
    \begin{minipage}[b]{0.41\textwidth}
\centering
\begin{tikzpicture}[
  x=1.32cm,
  y=1.32cm,
  >=Latex,
  line cap=round,
  line join=round
]
  \useasboundingbox (-0.20,-0.72) rectangle (3.65,2.68);
  \draw[->,black!55] (-0.12,0)--(3.28,0) node[right] {\(c_x\)};
  \draw[->,black!55] (0,-0.12)--(0,2.53) node[above] {\(c_y\)};
  \draw[dashed,black!40] (0,1.62)--(2.15,1.62)--(2.15,0);
  \draw[very thick,->,blue!72!black] (0,0)--(2.15,1.62)
    node[midway,above left] {\(\vect c\)};
  \draw[->] (0.68,0) arc[start angle=0,end angle=37,radius=0.68];
  \node[font=\scriptsize] at (19:0.94) {\(\alpha\)};
  \node[font=\scriptsize,below] at (2.15,0) {\(c_x\)};
  \node[font=\scriptsize,left] at (0,1.62) {\(c_y\)};

  \fill[black] (0,0) circle (0.045);
  \node[anchor=north west,font=\scriptsize] at (0.10,-0.13)
    {\(\vect c=\vect0\)};
  \node[anchor=north west,font=\scriptsize] at (0.10,-0.42)
    {straight configuration};
\end{tikzpicture}

\smallskip
\textbf{(a)} curvature-vector representation
\end{minipage}
\hfill
\begin{minipage}[b]{0.56\textwidth}
\centering
\begin{tikzpicture}
\begin{axis}[
  width=\linewidth,
  height=4.49cm,
  axis lines=left,
  xmin=0,xmax=1.38,
  ymin=-1.15,ymax=1.15,
  xlabel={time \(t\)},
  ylabel={schematic error/boundary value},
  xtick={0,1},
  xticklabels={\(0\),\(T_j\)},
  ytick={-1,-0.2,0,0.2,1},
  yticklabels={\(-B_{j0}\),\(-\varepsilon_j\),\(0\),\(\varepsilon_j\),\(B_{j0}\)},
  legend style={
    at={(0.98,0.98)},
    anchor=north east,
    draw=none,
    fill=white,
    font=\scriptsize,
    row sep=-2pt
  },
  tick label style={font=\scriptsize},
  label style={font=\scriptsize},
  samples=100,
  clip=false
]
  \addplot[very thick,blue!72!black,domain=0:1]
    {0.2+0.8*(1-3*x^2+2*x^3)};
  \addplot[very thick,blue!72!black,domain=1:1.38] {0.2};
  \addlegendentry{\(+B_j(t)\)}
  \addplot[very thick,blue!72!black,domain=0:1]
    {-0.2-0.8*(1-3*x^2+2*x^3)};
  \addplot[very thick,blue!72!black,domain=1:1.38] {-0.2};
  \addlegendentry{\(-B_j(t)\)}
  \addplot[thick,red!72!black,dashed,domain=0:1.38]
    {0.63*exp(-2.3*x)*cos(deg(4.2*x))};
  \addlegendentry{schematic \(e(t)\)}
  \draw[black!40,thin] (axis cs:0,0)--(axis cs:1.38,0);
  \draw[black!50,densely dotted]
    (axis cs:1,-1.06)--(axis cs:1,1.06);
  \node[anchor=north west,font=\scriptsize] at (axis cs:1.02,-0.34)
    {terminal band};
  \draw[->,thin] (axis cs:1.12,-0.32)--(axis cs:1.12,-0.08);
  \node[
    font=\scriptsize,
    fill=white,
    fill opacity=0.90,
    text opacity=1,
    inner sep=1pt
  ] at (axis cs:0.68,-0.88)
    {\(\xi_j=e_j/B_j,\quad
      \abs{e_j}<B_j\Longleftrightarrow\abs{\xi_j}<1\)};
\end{axis}
\end{tikzpicture}

\smallskip
\textbf{(b)} prescribed performance envelope
\end{minipage}
    \caption{Curvature-vector representation and prescribed-time contracting boundaries: the straight configuration corresponds to the origin, and each symmetric boundary contracts from $B_{j0}$ to the nonzero terminal bound $\varepsilon_j$ within the prescribed time $T_j$. The error trajectory is shown for illustration only.}
    \label{fig:vector_envelope}
\end{figure*}

\begin{lemma}[Transformation equivalence] \label{lem:transformation_equivalence}
    For each fixed $t$, the dual transformation is bijective, with inverse
    \begin{subequations} \label{eq:transformation_equivalence}
        \begin{gather}
            e_j = B_j(t) \xi_j, \label{eq:transformation_inverse} \\
            \abs{\xi_j} < 1 \text{ if and only if } \abs{e_j} < B_j(t). \label{eq:constraint_equivalence}
        \end{gather}
    \end{subequations}
    Moreover, the transformed error satisfies
    \begin{equation}
        \dot{\vect\xi} = \mat D_B^{-1} \left(\dot{\vect e}_c - \dot{\mat D}_B \mat D_B^{-1} \vect e_c\right), \label{eq:xi_derivative}
    \end{equation}
    where $\vect e_c = [e_x,e_y]^{\mathsf T}$.
\end{lemma}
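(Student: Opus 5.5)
The argument rests on Lemma~\ref{lem:boundary_properties}: $B_j(t)\ge\varepsilon_j>0$ for all $t\ge0$, and $B_j\in C^1$. Positivity makes division by $B_j(t)$ legitimate at every instant, and differentiability lets us differentiate the transformation in time. The plan has three steps: bijectivity, equivalence of the constraints, and the derivative formula.

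\emph{Bijectivity and inverse.} Fix $t$. By Eq.~\eqref{eq:two_step_transform}, the map $e_j\mapsto\xi_j$ is the composition of $s_j=e_j/B_{j0}$ and $\xi_j=\beta_j(t)s_j$. Each factor is multiplication by a nonzero real constant: $B_{j0}>0$ by Eq.~\eqref{eq:envelope_conditions}, and $\beta_j(t)\ge1$ by Eq.~\eqref{eq:beta_bounds}. Hence each factor is a linear bijection of $\R$, and so is their composition. Since $\beta_j(t)/B_{j0}=1/B_j(t)$, the composite map is $\xi_j=e_j/B_j(t)$, whose inverse is $e_j=B_j(t)\xi_j$; this is Eq.~\eqref{eq:transformation_inverse}.

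\emph{Constraint equivalence.} Because $B_j(t)>0$, we have $\abs{\xi_j}=\abs{e_j}/B_j(t)$. Multiplying or dividing by the positive quantity $B_j(t)$ preserves strict inequalities, so $\abs{\xi_j}<1$ holds if and only if $\abs{e_j}<B_j(t)$. This gives Eq.~\eqref{eq:constraint_equivalence}.

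\emph{Derivative formula.} Assume $\vect e_c$ is differentiable along the trajectory. Then $\vect\xi=\mat D_B^{-1}\vect e_c$, with $\mat D_B$ diagonal, positive definite and $C^1$. Apply the product rule together with the identity
\begin{equation*}
    \frac{\dd}{\dd t}\mat D_B^{-1}=-\mat D_B^{-1}\dot{\mat D}_B\mat D_B^{-1},
\end{equation*}
which follows from differentiating $\mat D_B\mat D_B^{-1}=\mat I_2$. This gives
\begin{equation*}
    \dot{\vect\xi}=\mat D_B^{-1}\dot{\vect e}_c-\mat D_B^{-1}\dot{\mat D}_B\mat D_B^{-1}\vect e_c,
\end{equation*}
and factoring out $\mat D_B^{-1}$ yields Eq.~\eqref{eq:xi_derivative}. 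As a check, the componentwise computation $\dot\xi_j=(\dot e_jB_j-e_j\dot B_j)/B_j^2$ gives the same result.

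The only delicate point is regularity at $t=T_j$, where $B_j$ changes definition. Lemma~\ref{lem:boundary_properties} establishes $C^1$ continuity there, so $\dot{\mat D}_B$ exists and is continuous for all $t\ge0$, and the formula holds everywhere, including at $T_j$.
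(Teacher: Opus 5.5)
Your proposal is correct and follows essentially the same route as the paper. Positivity of $B_j(t)$ gives the bijection $\xi_j=e_j/B_j(t)$ with inverse $e_j=B_j(t)\xi_j$ and the constraint equivalence, and the product rule with $\frac{\mathrm d}{\mathrm dt}\mat D_B^{-1}=-\mat D_B^{-1}\dot{\mat D}_B\mat D_B^{-1}$ gives Eq.~\eqref{eq:xi_derivative}; your extra touches (composing the two scalings explicitly, differentiating $\mat D_B\mat D_B^{-1}=\mat I_2$ instead of $\mat D_B^{-1}\mat D_B=\mat I_2$, and noting $C^1$ regularity at $T_j$) are cosmetic refinements rather than a different argument.
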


\begin{proof}
    By Eqs.~\eqref{eq:envelope_conditions} and \eqref{eq:envelope_bounds}, $B_j(t) > 0$ for all $t \ge 0$. Hence, for each fixed $t$, the scalar mapping in Eq.~\eqref{eq:behavior_multiplier_scaling} is a linear scaling by the nonzero factor $1/B_j(t)$. Its inverse is given by Eq.~\eqref{eq:transformation_inverse}, which establishes bijectivity. Taking the absolute value of Eq.~\eqref{eq:behavior_multiplier_scaling} gives
    \begin{equation} \label{eq:proof_absolute_normalized_error}
        \abs{\xi_j} = \frac{\abs{e_j}}{B_j(t)}.
    \end{equation}
    Since $B_j(t)>0$, the condition $\abs{\xi_j}<1$ is equivalent to
    \begin{equation}
        \frac{\abs{e_j}}{B_j(t)}<1,
    \end{equation}
    which is equivalent to $\abs{e_j}<B_j(t)$. Thus, Eq.~\eqref{eq:constraint_equivalence} follows. It remains to establish the derivative identity. Differentiating Eq.~\eqref{eq:vector_normalized_error} gives
    \begin{equation} \label{eq:proof_normalized_error_product_rule}
        \dot{\vect\xi} = \frac{\mathrm d}{\mathrm dt}\bigl(\mat D_B^{-1}\bigr)\vect e_c + \mat D_B^{-1} \dot{\vect e}_c.
    \end{equation}
    Differentiating the identity $\mat D_B^{-1} \mat D_B = \mat I_2$ yields
    \begin{equation} \label{eq:proof_inverse_identity_derivative}
        \frac{\mathrm d}{\mathrm dt} \bigl(\mat D_B^{-1}\bigr) \mat D_B + \mat D_B^{-1} \dot{\mat D}_B = \mat 0.
    \end{equation}
    Multiplying this equation on the right by $\mat D_B^{-1}$ gives
    \begin{equation} \label{eq:proof_inverse_matrix_derivative}
        \frac{\mathrm d}{\mathrm dt} \bigl(\mat D_B^{-1}\bigr) = -\mat D_B^{-1}\dot{\mat D}_B\mat D_B^{-1}.
    \end{equation}
    Substitution into Eq.~\eqref{eq:proof_normalized_error_product_rule} then yields
    \begin{equation} \label{eq:proof_xi_derivative_result}
        \dot{\vect\xi} = \mat D_B^{-1}(t) \left(\dot{\vect e}_c - \dot{\mat D}_B(t)\mat D_B^{-1}(t) \vect e_c\right),
    \end{equation}
    which coincides with Eq.~\eqref{eq:xi_derivative}. Therefore, all the claimed properties hold.
\end{proof}

The controller is designed to satisfy the following componentwise performance specifications:
\begin{subequations} \label{eq:prescribed_specifications}
    \begin{gather}
        \abs{e_j(t)} < B_j(t), \label{eq:prescribed_boundary_specification}\\
        \abs{e_j(t)} < \varepsilon_j,\quad t\ge T_j, \label{eq:prescribed_terminal_specification}\\
        \lim_{t\to\infty}e_j(t)=0. \label{eq:prescribed_asymptotic_specification}
    \end{gather}
\end{subequations}
The first specification requires the curvature-tracking error to remain inside the prescribed-time contracting boundary $B_j(t)$. By construction, this boundary contracts from $B_{j0}$ to the nonzero terminal bound $\varepsilon_j$ by the prescribed time $T_j$ and remains constant thereafter. The second specification therefore guarantees that the error enters the terminal accuracy band by $T_j$, while the third establishes subsequent asymptotic convergence to the zero-error state. Figure~\ref{fig:vector_envelope} illustrates the curvature-vector coordinates and the corresponding componentwise contracting boundaries.

\begin{definition}[Logarithmic barrier Lyapunov function] \label{def:barrier_function}
    On the open unit box
    \begin{equation}
        \mathcal X=\{\vect\xi:\abs{\xi_x}<1,\ \abs{\xi_y}<1\},
    \end{equation}
    define the logarithmic barrier Lyapunov function
    \begin{equation} \label{eq:barrier_function}
        V_B(\vect\xi)=\frac{1}{2}\sum_{j\in\{x,y\}}\ln\left(\frac{1}{1-\xi_j^2}\right).
    \end{equation}
    Then $V_B(\vect\xi)\ge0$ for all $\vect\xi\in\mathcal X$, with $V_B(\vect\xi)=0$ if and only if $\vect\xi=\vect0$. Moreover, $V_B(\vect\xi)\to+\infty$ as $\abs{\xi_j}\to1$ for either $j\in\{x,y\}$. Thus, the logarithmic barrier prevents the normalized error from reaching the boundary of the admissible unit box.
\end{definition}

\section{Kinematic modeling} \label{sec:kinematics}
This section develops the kinematic model of the single-segment six-tendon-driven flexible arm. We first introduce the mechanical architecture, Cartesian curvature coordinates, and modeling assumptions, followed by the tendon-to-curvature mapping together with its inverse and differential forms. A continuously extended constant-curvature tip-position map is then established, including the straight configuration.

\subsection{Mechanical architecture and coordinates}
\begin{figure*}[htbp]
    \centering
    \resizebox{\textwidth}{!}{%
\providecommand{\FigureTwoRoot}{figures}
\begin{tikzpicture}[
  x=1cm,
  y=1cm,
  >=Latex,
  line cap=round,
  line join=round,
  paneltitle/.style={font=\footnotesize},
  annotation/.style={font=\scriptsize,fill=white,inner sep=1.2pt}
]
\useasboundingbox (0,-0.48) rectangle (18.8,4.72);

\begin{scope}[shift={(0,0)}]
  \input{\FigureTwoRoot/fig_01a_geometry_3d}
\end{scope}

\begin{scope}[shift={(6.20,0)}]
  \coordinate (C) at (3.00,2.45);
  \draw[thick] (C) circle (1.52);
  \draw[->,black!55] (1.17,2.45)--(4.94,2.45) node[right] {\(x\)};
  \draw[->,black!55] (3.00,0.66)--(3.00,4.27) node[above] {\(y\)};
  \draw[red!72!black,thick]
    ($(C)+(0:1.52)$)--($(C)+(180:1.52)$);
  \draw[blue!72!black,thick,dashed]
    ($(C)+(60:1.52)$)--($(C)+(240:1.52)$);
  \draw[green!45!black,thick,dash dot]
    ($(C)+(120:1.52)$)--($(C)+(300:1.52)$);
  \foreach \ang/\num in {0/1,60/2,120/3,180/4,240/5,300/6}{
    \fill[white] ($(C)+(\ang:1.52)$) circle (0.105);
    \draw[thick] ($(C)+(\ang:1.52)$) circle (0.105);
    \node[font=\scriptsize] at ($(C)+(\ang:1.78)$) {\num};
  }
  \fill (C) circle (0.035);
  \draw[very thick,->] (C)--($(C)+(0:1.30)$);
  \node[annotation,below] at ($(C)+(0:0.72)$) {\(d\)};
  \draw[->] ($(C)+(0:0.68)$)
    arc[start angle=0,end angle=60,radius=0.68];
  \node[annotation] at ($(C)+(30:0.97)$) {\(60^\circ\)};
  \node[font=\scriptsize,green!45!black] at (2.08,0.52) {\((3,6)\)};
  \node[font=\scriptsize,red!72!black] at (3.00,0.52) {\((1,4)\)};
  \node[font=\scriptsize,blue!72!black] at (3.92,0.52) {\((2,5)\)};

  \node[paneltitle,anchor=base] at (3.00,-0.24)
    {\textbf{(b)} tendon cross-section};
\end{scope}

\begin{scope}[shift={(12.55,0)}]
  \coordinate (O) at (0.55,0.72);
  \def\startang{10}
  \def\endang{66}
  \def\rhoc{3.55}
  \def\rhot{2.67}

  \draw[thick,black!72]
    ($(O)+(\startang:\rhoc)$)
    arc[start angle=\startang,end angle=\endang,radius=\rhoc];
  \draw[very thick,blue!72!black]
    ($(O)+(\startang:\rhot)$)
    arc[start angle=\startang,end angle=\endang,radius=\rhot];
  \draw[black!38] (O)--($(O)+(\startang:\rhoc)$);
  \draw[black!38] (O)--($(O)+(\endang:\rhoc)$);

  \draw[->,black!72] (O)--($(O)+(36:\rhoc)$);
  \node[annotation] at ($(O)+(36:1.82)$) {\(\rho\)};
  \draw[->,blue!72!black] (O)--($(O)+(20:\rhot)$);
  \node[annotation] at ($(O)+(20:1.35)$) {\(\rho_i\)};
  \draw[<->,black!72]
    ($(O)+(53:\rhot)$)--($(O)+(53:\rhoc)$);
  \node[annotation,anchor=west] at ($(O)+(53:3.00)$)
    {\(d\cos(\alpha-\phi_i)\)};
  \draw[->] ($(O)+(10:0.70)$)
    arc[start angle=10,end angle=66,radius=0.70];
  \node[annotation] at ($(O)+(38:0.95)$) {\(\theta\)};
  \fill (O) circle (0.04);
  \node[annotation,below left] at (O) {bend center};

  \node[annotation,anchor=west] (clabel) at (3.42,4.24)
    {centerline};
  \draw[->,black!55] (clabel.south west)--($(O)+(58:\rhoc)$);
  \node[annotation,anchor=west,text=blue!72!black,fill=white] at (3.26,1.10)
    {\(i,\ l_i\)};
  \node[font=\scriptsize,anchor=base] at (3.10,0.42)
    {\(\rho_i=\rho-d\cos(\alpha-\phi_i),\quad l_i=\rho_i\theta\)};

  \node[paneltitle,anchor=base] at (3.10,-0.24)
    {\textbf{(c)} single-tendon geometry};
\end{scope}
\end{tikzpicture}%
}
    \caption{Idealized geometry of the single-segment tendon-driven arm: (a) three-dimensional structure with six tendon paths, (b) tendon cross-section showing three antagonistic tendon pairs, and (c) constant-curvature geometry for tendon $i$.}
    \label{fig:combined_geometry}
\end{figure*}

Consider a single flexible segment with centerline length $L>0$. Six tendons are routed through guides located at a constant radial offset $d>0$ from the centerline and are arranged into three antagonistic pairs: $(1,4)$, $(2,5)$, and $(3,6)$. The $z$-axis of the base frame is aligned with the centerline of the arm in its straight configuration, and the positive $x$-axis passes through tendon~1. The tendon angular positions are given by
\begin{equation} \label{eq:tendon_angles}
    \phi_i=(i-1)\frac{\pi}{3}, i\in\{1,\ldots,6\}.
\end{equation}
When defined, the bending direction $\alpha$ is measured counterclockwise from the positive $x$-axis toward the positive $y$-axis. Figure~\ref{fig:combined_geometry} illustrates the tendon numbering and the corresponding ideal constant-curvature configuration.

Each tendon is modeled as a tension-transmission element, and its differential shortening is measured relative to a calibrated zero configuration. Pretension is treated separately from the shape coordinates and therefore cannot be determined from the differential tendon shortening $\vect q$. The tendon numbering and motor rotation directions are calibrated separately to maintain consistency between the actuator commands and the tendon coordinates. Let $l_i$ and $l_{i,\mathrm{ref}}$ denote the actual length and calibrated reference length of tendon $i$, respectively. The differential shortening is defined as
\begin{equation} \label{eq:q_sign}
    q_i=l_{i,\mathrm{ref}}-l_i,
\end{equation}
so that $q_i>0$ denotes tendon shortening. In the ideal kinematic model, $l_{i,\mathrm{ref}}=L$, whereas an experimental platform may require
individual calibrated reference lengths for the six tendons. Define
\begin{equation}
    \vect l=[l_1,\ldots,l_6]^{\mathsf T}, \quad \vect q=[q_1,\ldots,q_6]^{\mathsf T}.
\end{equation}
Since the reference lengths are constant, the tendon and shortening velocities satisfy
\begin{equation}
    \dot{\vect l}=-\dot{\vect q}.
\end{equation}
This sign convention is retained when mapping tendon velocities to the motor commands. The Cartesian curvature vector provides a polar-to-Cartesian reparameterization of the conventional curvature magnitude and bending direction:
\begin{equation}
    \vect c=
    \begin{bmatrix}
        c_x\\
        c_y
    \end{bmatrix}
    =
    \begin{bmatrix}
        \kappa\cos\alpha\\
        \kappa\sin\alpha
    \end{bmatrix},
    \label{eq:curvature_vector}
\end{equation}
so that $\kappa=\|\vect c\|$. This representation remains well defined at $\kappa=0$, where $\vect c=\vect 0$ although the bending direction $\alpha$ is undefined. Consequently, the controller can pass through the straight configuration without a change of coordinates or division by $\kappa$. Here, $\vect q=\vect0$ denotes zero differential tendon shortening and does not imply zero tendon tension. For $\kappa>0$, the bending direction is recovered as
\begin{equation}
    \alpha=\operatorname{atan2}(c_y,c_x).
\end{equation}

Let the desired curvature vector be $\vect c_d=[c_{d,x},c_{d,y}]^{\mathsf T}$, and define the curvature-tracking error as
\begin{equation}
    \vect e_c=
    \begin{bmatrix}
        e_x\\
        e_y
    \end{bmatrix}
    =\vect c_d-\vect c.
    \label{eq:error_definition}
\end{equation}
The desired-minus-actual sign convention in Eq.~\eqref{eq:error_definition} is used throughout this work. For each $j\in\{x,y\}$, the componentwise performance requirement is
\begin{equation}
    \abs{e_j(t)}<B_j(t),\qquad t\ge0,
    \label{eq:performance_objective}
\end{equation}
where $B_j(t)$ is the prescribed-time contracting performance boundary. In addition, the controller is required to achieve
\begin{subequations} \label{eq:terminal_objectives}
    \begin{gather}
        \abs{e_j(t)}<\varepsilon_j,\quad t\ge T_j, \\
        \lim_{t\to\infty}\norm{\vect e_c(t)}=0.
    \end{gather}
\end{subequations}
The first condition specifies prescribed-time entry into a nonzero terminal accuracy band, whereas the second specifies asymptotic convergence to the zero-error state under the ideal model. The componentwise contracting boundaries define the admissible error
rectangle in the $(e_x,e_y)$ plane and, consequently,
\begin{equation}
    \norm{\vect e_c(t)}
    <\sqrt{B_x^2(t)+B_y^2(t)}.
\end{equation}
After $T_c=\max\{T_x,T_y\}$, both curvature-error components lie within their respective terminal accuracy bounds.

To establish the prescribed-time tracking properties of the proposed controller, the following assumptions define the nominal kinematic and
execution conditions under which the analytical results are derived.

\begin{assumption}
\label{ass:bending}
    Over the considered workspace, the flexible segment is approximated by a single constant-curvature arc. Axial extension, shear, and torsion are neglected relative to bending. The segment length $L$, tendon offset $d$, and guide angles are assumed to remain constant during each run.
\end{assumption}

Assumption~\ref{ass:bending} adopts a single-segment constant-curvature approximation for the considered bending-dominant workspace
\cite{jones2006multisection,webster2010review,rao2021benchmark}. Effects such as axial extension, shear, torsion, routing friction, and
significant external loading can reduce the accuracy of this approximation \cite{rucker2011routing,yuan2019friction}. Accordingly, the proposed method focuses on curvature regulation rather than axial positioning. Tasks requiring substantial axial motion may be combined with an external linear positioning module, as in related steering systems \cite{song2026bronchoscope}. The shape/model residual reported in Section~\ref{sec:numerical-simulation} provides a numerical check of the approximation over the tested workspace, but does not establish its validity beyond that workspace.

\begin{assumption}
\label{ass:loading}
    Gravity and persistent external loads are either compensated or sufficiently small within the nominal analytical workspace. Contact with the environment is excluded.
\end{assumption}

Assumption~\ref{ass:loading} specifies the nominal load condition for the analytical results. In practice, supports, counterweights, and suitable orientation can reduce gravity-induced deformation, while low-speed operation can further reduce dynamic effects, but cannot eliminate it in general. More comprehensive distributed-load and dynamic models are required when such effects are significant \cite{renda2014dynamic,till2019cosserat}. The gravity-enabled simulations in Section~\ref{sec:numerical-simulation} illustrate the resulting increase in model residual.

\begin{assumption}
\label{ass:execution}
    The desired curvature $\vect c_d(t)$ is continuously differentiable with bounded value and derivative. The commanded tendon velocities are assumed to be executed exactly without saturation, slack, delay, or measurement error.
\end{assumption}

Assumption~\ref{ass:execution} is imposed solely for the analytical development and does not represent a hardware guarantee. The Python
implementation verifies the ideal kinematic and control calculations, while the OpenCR--MuJoCo simulations examine the effects of saturation, slack, delay, and model mismatch. The physical experiment provides reduced-order experimental evidence rather than full six-tendon validation. If $L$, $d$, or $\phi_i$ vary with time, additional parameter-rate terms arise when differentiating the tendon-to-curvature mapping and are not included in the ideal model.

\subsection{Tendon-to-curvature mapping}
For $\kappa>0$, the centerline radius and bending angle are given by
\begin{gather}
    \rho=\frac{1}{\kappa}, \\
    \theta=\kappa L.
\end{gather}
The radius of the circular arc associated with tendon $i$ is
\begin{equation}
    \rho_i=\rho-d\cos(\alpha-\phi_i).
\end{equation}
Hence, its length and differential shortening are
\begin{align}
    l_i
    &=\rho_i\theta
     =L-dL\kappa\cos(\alpha-\phi_i), \\
    q_i
    &=L-l_i
     =dL\kappa\cos(\alpha-\phi_i).
    \label{eq:single_tendon_geometry}
\end{align}
The signed quantity $q_i$ indicates the tendon length change relative to the calibrated reference configuration: $q_i>0$ denotes shortening, whereas $q_i<0$ denotes lengthening. For example, when the arm bends toward $+x$,
\begin{equation}
    q_1=dL\kappa>0,
    \quad
    q_4=-dL\kappa<0.
\end{equation}

Using Eqs.~\eqref{eq:curvature_vector} and \eqref{eq:single_tendon_geometry}, the tendon shortening can be expressed directly in terms of the Cartesian curvature components as
\begin{equation}
    q_i
    =dL\left(c_x\cos\phi_i+c_y\sin\phi_i\right).
    \label{eq:single_tendon_vector}
\end{equation}
Unlike the curvature-magnitude and bending-direction representation, this form remains well defined at the straight configuration $\vect c=\vect0$. Stacking the six tendon relations yields
\begin{equation}
    \vect q=dL\mat C\vect c,
    \quad
    \mat C=
    \begin{bmatrix}
        1&0\\
        \tfrac12&\tfrac{\sqrt3}{2}\\
        -\tfrac12&\tfrac{\sqrt3}{2}\\
        -1&0\\
        -\tfrac12&-\tfrac{\sqrt3}{2}\\
        \tfrac12&-\tfrac{\sqrt3}{2}
    \end{bmatrix}.
    \label{eq:six_tendon_mapping}
\end{equation}
Each row of $\mat C$ represents the radial direction of the corresponding tendon in the transverse plane. The two columns of $\mat C$ describe the tendon shortening patterns associated with the $c_x$- and $c_y$-components of the curvature vector, respectively. Direct calculation gives
\begin{equation}
    \rank(\mat C)=2,
    \quad
    \mat C^{\mathsf T}\mat C=3\mat I_2.
    \label{eq:C_properties}
\end{equation}
The dimensions are consistent: $\vect c$ has units of $\mathrm{m}^{-1}$, $dL$ has units of $\mathrm{m}^2$, and $\mat C$ is dimensionless, yielding $\vect q$ in $\mathrm{m}$. Moreover, the identity $\mat C^{\mathsf T}\mat C=3\mat I_2$ provides the closed-form inverse mapping given below. For the ideal $60^\circ$ tendon arrangement, the two curvature directions are isotropic in the tendon-to-curvature mapping, meaning that the mapping has the same Euclidean gain for bending in the $x$- and $y$-directions. In particular,
\begin{equation}
    \norm{\vect q}_2=\sqrt{3}\,dL\norm{\vect c}_2,
\end{equation}
so the tendon shortening magnitude depends only on the curvature magnitude and is independent of the bending direction. Equivalently, the inverse mapping has the Euclidean gain $1/(\sqrt{3}\,dL)$. In a physical platform, unequal tendon offsets, routing angles, or transmission gains generally destroy this symmetry. The resulting tendon-to-curvature map should therefore be calibrated from independent shape measurements and verified to have full column rank. The detailed derivation of $\mat C$ is provided in \ref{sec:supp_mapping}.

Opposite tendons satisfy
\begin{subequations} \label{eq:compatibility_relations}
    \begin{gather}
    q_4=-q_1,\quad
    q_5=-q_2,\quad
    q_6=-q_3, \\
    q_1-q_2+q_3=0.
    \end{gather}
\end{subequations}
These relations characterize the two-dimensional tendon-shortening subspace compatible with the ideal pure-bending kinematics. Since $\mat C\in\mathbb{R}^{6\times2}$ has rank two, the tendon-to-curvature mapping contains four redundant tendon directions. In particular, the four-dimensional null space of $\mat C^{\mathsf T}$ corresponds to tendon shortening patterns that are invisible to the curvature reconstruction in Eq.~\eqref{eq:six_tendon_mapping}. These null-space motions do not alter the curvature predicted by the ideal kinematic model and therefore do not constitute additional shape coordinates. Accordingly, curvature measurements alone cannot distinguish such null-space tendon states or recover tendon tension. A full basis for the null space and the associated observability discussion are provided in \ref{sec:supp_null}.

\subsection{Inverse and differential kinematics}
Equation~\eqref{eq:C_properties} yields the exact inverse mapping on the ideal pure-bending subspace. For constant $L$, $d$, and $\mat C$, the corresponding differential relations are
\begin{equation}
    \vect c
    =\frac{1}{3dL}\mat C^{\mathsf T}\vect q,
    \quad
    \dot{\vect q}
    =dL\mat C\dot{\vect c},
    \quad
    \dot{\vect c}
    =\frac{1}{3dL}\mat C^{\mathsf T}\dot{\vect q}.
    \label{eq:inverse_differential}
\end{equation}
When the measured tendon-shortening vector does not lie exactly in the ideal pure-bending subspace, the same reconstruction provides the least-squares curvature estimate. For a measured tendon-shortening vector $\vect q_m$, the reconstructed curvature is given by
\begin{equation}
    \hat{\vect c}
    =\frac{1}{3dL}\mat C^{\mathsf T}\vect q_m,
    \label{eq:curvature_reconstruction}
\end{equation}
which is the least-squares solution when $\vect q_m$ does not lie exactly in the ideal pure-bending subspace. The corresponding incompatibility residual is
\begin{equation}
    \vect r_q
    =
    \left(
        \mat I_6-\frac{1}{3}\mat C\mat C^{\mathsf T}
    \right)\vect q_m.
    \label{eq:compatibility_residual}
\end{equation}
A nonzero residual indicates deviation from the ideal pure-bending compatibility relations and may arise from reference-length zeroing errors, calibration errors, tendon slack, or geometric mismatch. The residual alone, however, does not identify the underlying physical cause. The tendon-based curvature reconstruction and the residual $\vect r_q$ assess consistency with the same two-dimensional kinematic model. They do not provide independent measurements of the backbone shape and therefore cannot detect errors that remain within the ideal tendon-to-curvature subspace. Independent shape sensing is consequently used for curvature feedback in the model-based study. The corresponding decomposition into curvature-representable and curvature-invisible tendon components, together with the minimum-norm tendon-velocity allocation, is detailed in \ref{sec:supp_null}.

\subsection{Tip-position model and straight-configuration extension}
\label{sec:tip-position-model}
Let $\kappa=\norm{\vect c}$. For $\kappa>0$, the tip position of the constant-curvature segment is given by
\begin{equation}
    \vect p(\vect c)=
    \begin{bmatrix}
        c_x A(\kappa)\\
        c_y A(\kappa)\\
        G(\kappa)
    \end{bmatrix},
    \label{eq:tip_position}
\end{equation}
where
\begin{align}
    A(\kappa)&=\frac{1-\cos(L\kappa)}{\kappa^2}, \\
    G(\kappa)&=\frac{\sin(L\kappa)}{\kappa}.
\end{align}
The transverse tip position is directed along the curvature vector $\vect c$, while the axial tip position is determined solely by the curvature magnitude $\kappa$. This expression represents the standard constant-curvature kinematics directly in Cartesian curvature coordinates, without introducing a separate bending-plane rotation. Define the tip-position Jacobian as
\begin{equation}
    \mat J_p(\vect c)=
    \frac{\partial \vect p(\vect c)}{\partial \vect c}.
    \label{eq:tip_position_jacobian}
\end{equation}
Since $\vect p$ depends on $\vect c$ both explicitly and implicitly through $\kappa=\norm{\vect c}$, the Jacobian is obtained by the chain rule. For $\kappa>0$,
\begin{equation}
    \frac{\partial\kappa}{\partial\vect c}
    =
    \frac{\vect c^{\mathsf T}}{\kappa},
    \quad
    \frac{\partial\kappa}{\partial c_x}
    =\frac{c_x}{\kappa},
    \quad
    \frac{\partial\kappa}{\partial c_y}
    =\frac{c_y}{\kappa}.
    \label{eq:curvature_magnitude_derivatives}
\end{equation}
The scalar derivatives of $A(\kappa)$ and $G(\kappa)$ are
\begin{align}
    A'(\kappa)
    &=
    \frac{L\kappa\sin(L\kappa)-2[1-\cos(L\kappa)]}{\kappa^3},
    \label{eq:Aprime}\\
    G'(\kappa)
    &=
    \frac{L\kappa\cos(L\kappa)-\sin(L\kappa)}{\kappa^2}.
    \label{eq:Gprime}
\end{align}
Consequently, for $\kappa>0$, the $3\times2$ tip-position Jacobian is
\begin{equation}
    \mat J_p(\vect c)=
    \begin{bmatrix}
        A(\kappa)+\dfrac{A'(\kappa)}{\kappa}c_x^2
        &
        \dfrac{A'(\kappa)}{\kappa}c_xc_y
        \\[6pt]
        \dfrac{A'(\kappa)}{\kappa}c_xc_y
        &
        A(\kappa)+\dfrac{A'(\kappa)}{\kappa}c_y^2
        \\[6pt]
        \dfrac{G'(\kappa)}{\kappa}c_x
        &
        \dfrac{G'(\kappa)}{\kappa}c_y
    \end{bmatrix}.
    \label{eq:tip_position_jacobian_explicit}
\end{equation}
Although the expressions above contain terms involving $1/\kappa$, the apparent singularity at $\kappa=0$ is removable. To evaluate the straight configuration, consider the Taylor expansions
\begin{subequations}
    \label{eq:tip_position_series}
    \begin{align}
        A(\kappa)
        &=
        \frac{L^2}{2}
        -\frac{L^4}{24}\kappa^2
        +\frac{L^6}{720}\kappa^4
        +O(\kappa^6),
        \label{eq:A_series}\\
        G(\kappa)
        &=
        L-\frac{L^3}{6}\kappa^2
        +\frac{L^5}{120}\kappa^4
        +O(\kappa^6), \label{eq:G_series}\\
        A'(\kappa)
        &=
        -\frac{L^4}{12}\kappa
        +\frac{L^6}{180}\kappa^3
        +O(\kappa^5), \label{eq:Aprime_series}\\
        G'(\kappa)
        &=
        -\frac{L^3}{3}\kappa
        +\frac{L^5}{30}\kappa^3
        +O(\kappa^5).
        \label{eq:Gprime_series}
    \end{align}
\end{subequations}
For numerical implementation, the Taylor series expansions are used when $L\kappa$ is sufficiently small to avoid numerical cancellation. This numerical treatment does not alter the analytical tip-position map. Using $\kappa^2=c_x^2+c_y^2$, the corresponding multivariate expansions of the tip-position components are
\begin{subequations}
    \label{eq:tip_position_multivariate_series}
    \begin{align}
        p_x
        &=
        \frac{L^2}{2}c_x
        -\frac{L^4}{24}c_x(c_x^2+c_y^2)
        +O(\norm{\vect c}^5),
        \label{eq:px_series}\\
        p_y
        &=
        \frac{L^2}{2}c_y
        -\frac{L^4}{24}c_y(c_x^2+c_y^2)
        +O(\norm{\vect c}^5),
        \label{eq:py_series}\\
        p_z
        &=
        L-\frac{L^3}{6}(c_x^2+c_y^2)
        +\frac{L^5}{120}(c_x^2+c_y^2)^2
        +O(\norm{\vect c}^6).
        \label{eq:pz_series}
    \end{align}
\end{subequations}
Differentiating these expansions gives
\begin{subequations}
    \label{eq:tip_jacobian_series}
    \begin{align}
        \frac{\partial p_x}{\partial c_x}
        &=
        \frac{L^2}{2}
        -\frac{L^4}{24}(3c_x^2+c_y^2)
        +O(\norm{\vect c}^4),\\
        \frac{\partial p_x}{\partial c_y}
        &=
        -\frac{L^4}{12}c_xc_y
        +O(\norm{\vect c}^4),\\
        \frac{\partial p_y}{\partial c_x}
        &=
        -\frac{L^4}{12}c_xc_y
        +O(\norm{\vect c}^4),\\
        \frac{\partial p_y}{\partial c_y}
        &=
        \frac{L^2}{2}
        -\frac{L^4}{24}(c_x^2+3c_y^2)
        +O(\norm{\vect c}^4),\\
        \frac{\partial p_z}{\partial c_x}
        &=
        -\frac{L^3}{3}c_x
        +O(\norm{\vect c}^3),\\
        \frac{\partial p_z}{\partial c_y}
        &=
        -\frac{L^3}{3}c_y
        +O(\norm{\vect c}^3).
    \end{align}
\end{subequations}
Therefore, the removable straight-state limit is
\begin{equation}
    \vect p(\vect0)=
    \begin{bmatrix}
        0\\
        0\\
        L
    \end{bmatrix},
    \quad
    \mat J_p(\vect0)=
    \begin{bmatrix}
        L^2/2&0\\
        0&L^2/2\\
        0&0
    \end{bmatrix}.
    \label{eq:straight_configuration_jacobian}
\end{equation}
Hence, the tip-position map and its Jacobian are well defined and continuously differentiable through the straight configuration, despite the apparent $1/\kappa$ singularities in the nonzero-curvature representation. For any planar rotation $\mat R\in\mathrm{SO}(2)$, define its three-dimensional extension as $\widetilde{\mat R}=\operatorname{blkdiag}(\mat R,1)$. The tip-position map is rotation equivariant, namely,
\begin{equation}
  \vect p(\mat R\vect c)
  =\widetilde{\mat R}\vect p(\vect c),
  \quad
  \mat J_p(\mat R\vect c)
  =\widetilde{\mat R}\mat J_p(\vect c)\mat R\trans.
  \label{eq:rotation_equivariance}
\end{equation}
Here, a planar rotation of the curvature vector changes only its bending direction while preserving its magnitude, i.e., $\|\mat R\vect c\|=\|\vect c\|$. Accordingly, the horizontal components of the tip position rotate by the same angle, whereas the axial component remains unchanged. Thus, the tip-position map is rotationally equivariant, as expressed in \eqref{eq:rotation_equivariance}. Moreover, since both $\mat R$ and $\widetilde{\mat R}$ are orthogonal, the spectral norm of the Jacobian is invariant under this transformation. 
\begin{equation}
  \norm{\mat J_p(\mat R\vect c)}_2
  =
  \norm{\mat J_p(\vect c)}_2.
  \label{eq:jacobian_rotation_invariance}
\end{equation}
Consequently, any two curvature vectors with the same magnitude but different bending directions yield Jacobians with identical spectral norms. Therefore, over the circular curvature domain $\|\vect c\|\leq\kappa_{\max}$, the maximum Jacobian norm depends only on the curvature magnitude and can be obtained through a one-dimensional search over $\kappa\in[0,\kappa_{\max}]$. 

\section{Prescribed-time contracting-boundary control} \label{sec:control}
This section develops the prescribed-time contracting-boundary controller. A reference curvature velocity is first derived from the transformed-error dynamics and then mapped to the six tendon velocities through a compatible minimum-norm allocation. The resulting closed-loop system is subsequently analyzed in terms of envelope invariance, prescribed-time entry into the terminal accuracy region, and asymptotic convergence.

\begin{figure*}[htbp]
\centering
\resizebox{0.98\textwidth}{!}{%
\begin{tikzpicture}[
  x=1cm,
  y=1cm,
  line cap=round,
  line join=round,
  block/.style={
    draw=black!68,
    line width=0.55pt,
    rounded corners=2pt,
    align=center,
    minimum height=1.06cm,
    inner xsep=3pt,
    inner ysep=2pt,
    font=\footnotesize
  },
  referenceblock/.style={block,draw=blue!45!black,fill=blue!4},
  errorblock/.style={block,draw=blue!45!black,fill=blue!4},
  controllerblock/.style={block,draw=blue!45!black,fill=blue!4},
  allocationblock/.style={block,fill=black!2},
  plantblock/.style={block,draw=orange!45!black,fill=orange!7},
  supportblock/.style={
    block,
    draw=green!45!black,
    fill=green!7,
    minimum height=0.92cm
  },
  diagnosticblock/.style={
    block,
    draw=black!48,
    fill=black!1,
    minimum height=0.72cm,
    font=\scriptsize
  },
  terminalblock/.style={
    diagnosticblock,
    dashed,
    fill=white,
    font=\scriptsize\itshape
  },
  flow/.style={
    -{Latex[length=2.1mm,width=1.35mm]},
    draw=black!72,
    line width=0.70pt,
    shorten <=1.2pt,
    shorten >=1.8pt
  },
  diagnostic/.style={
    flow,
    dashed,
    draw=black!52
  },
  signal/.style={
    font=\scriptsize,
    text=black!82,
    fill=white,
    inner xsep=1.5pt,
    inner ysep=0.5pt
  }
]
\useasboundingbox (0,-3.78) rectangle (18.50,1.28);

\node[referenceblock,text width=2.15cm] (reference) at (1.25,0)
  {reference generator\\%
   \(\vect c_d,\dot{\vect c}_d\)};
\node[errorblock,text width=2.60cm] (error) at (4.45,0)
  {feedback-error computation\\[-0.1ex]
   \(\hat{\vect e}_c=\vect c_d-\hat{\vect c}_{\rm shape}\)};
\node[controllerblock,text width=3.00cm] (controller) at (8.10,0)
  {dual-transformation\\performance-shaping controller\\[-0.15ex]
   \(\scriptstyle B_j,\dot B_j,\mat K\)};
\node[allocationblock,text width=2.45cm] (allocation) at (11.65,0)
  {six-tendon velocity allocation\\[-0.1ex]
   \(dL\mat C\)};
\node[plantblock,text width=2.65cm] (plant) at (15.10,0)
  {OpenCR--MuJoCo\\flexible-arm model};

\draw[flow] (reference.east)
  -- node[signal,above] {\(\vect c_d\)} (error.west);
\draw[flow] (error.east)
  -- node[signal,above] {\(\hat{\vect e}_c\)} (controller.west);
\draw[flow] (controller.east)
  -- node[signal,above] {\(\dot{\vect c}_r\)} (allocation.west);
\draw[flow] (allocation.east)
  -- node[signal,above] {\(\dot{\vect q}\)} (plant.west);

\coordinate (ffstart) at ([xshift=4mm]reference.north);
\coordinate (fftop) at ($(ffstart)+(0,0.45)$);
\coordinate (ffin) at ([xshift=-5mm]controller.north);
\coordinate (ffcorner) at (fftop -| ffin);
\draw[flow,rounded corners=4pt]
  (ffstart) -- (fftop)
  -- node[signal,above] {\(\dot{\vect c}_d\)} (ffcorner)
  -- (ffin);

\node[supportblock,text width=2.50cm] (shapefit) at (15.10,-1.68)
  {constant-curvature shape fit\\[-0.1ex]\(\hat{\vect c}_{\rm shape}\)};
\draw[flow] (plant.south)
  -- node[signal,right] {backbone nodes} (shapefit.north);
\coordinate (feedbackcorner) at (4.45,-1.68);
\draw[flow,rounded corners=3pt]
  (shapefit.west)
  -- node[signal,above,pos=0.46] {\(\hat{\vect c}_{\rm shape}\)}
  (feedbackcorner)
  -- (error.south);

\node[diagnosticblock,text width=2.05cm] (states) at (9.10,-3.28)
  {measured\\tendon states\\\(\vect q_m\)};
\node[diagnosticblock,text width=2.15cm] (estimate) at (5.55,-3.28)
  {tendon-derived estimate\\%
   \(\hat{\vect c}_{q}\)};
\node[terminalblock,text width=2.10cm] (diagnostics) at (2.15,-3.28)
  {diagnostics only};
\coordinate (diagout) at ($(plant.east)+(0.38,0)$);
\coordinate (diagcorner) at (17.10,-3.28);
\draw[diagnostic,rounded corners=3pt]
  (plant.east) -- (diagout) |- (diagcorner) -- (states.east);
\draw[diagnostic] (states.west) -- (estimate.east);
\draw[diagnostic] (estimate.west) -- (diagnostics.east);
\end{tikzpicture}%
}
\caption{Implemented kinematic control architecture. The solid loop uses
constant-curvature shape-fitted curvature feedback, whereas the tendon-derived curvature
estimate is retained only for diagnostics.}
\label{fig:control_architecture}
\end{figure*}

\subsection{Reference curvature velocity}
The control objective is to satisfy the componentwise performance envelope and terminal conditions in Eqs.~\eqref{eq:performance_objective}--\eqref{eq:terminal_objectives} for any initially feasible tracking error. The preferred command interface is the Cartesian curvature vector $\vect c_d(t)$. If the desired curvature is specified by its magnitude $\kappa_d(t)$ and bending direction $\alpha_d(t)$, rather than directly by the Cartesian curvature vector, its Cartesian components and their time derivatives are obtained as
\begin{equation} \label{eq:desired_components}
    c_{d,x}=\kappa_d\cos\alpha_d, \quad c_{d,y}=\kappa_d\sin\alpha_d,
\end{equation}
and
\begin{subequations} \label{eq:desired_component_rates}
    \begin{gather}
        \dot c_{d,x}
        =\dot\kappa_d\cos\alpha_d
        -\kappa_d\dot\alpha_d\sin\alpha_d, \\
        \dot c_{d,y}
        =\dot\kappa_d\sin\alpha_d
        +\kappa_d\dot\alpha_d\cos\alpha_d.
    \end{gather}
\end{subequations}
When $\kappa_d=0$, the Cartesian vector representation avoids assigning an arbitrary bending direction $\alpha_d$. Accordingly, commands that pass through the straight configuration are specified directly in Cartesian curvature coordinates. The desired curvature vector and its time derivative should be generated
consistently in Cartesian coordinates to preserve continuity through the straight configuration and avoid artificial discontinuities in the feedforward term. To achieve asymptotic convergence while preserving the transformed error within the unit box, the desired dynamics of the normalized tracking error are specified as
\begin{equation}
  \dot{\vect\xi}=-\mat K\vect\xi,
  \quad
  \mat K=\diag(k_x,k_y),
  \quad
  k_x,k_y>0.
  \label{eq:desired_xi_dynamics}
\end{equation}
Solving Eq.~\eqref{eq:xi_derivative} together with
\(\dot{\vect e}_c=\dot{\vect c}_d-\dot{\vect c}\) yields the reference
curvature velocity
\begin{equation}
  \dot{\vect c}_r=
  \dot{\vect c}_d
  -\dot{\mat D}_B\mat D_B^{-1}\vect e_c
  +\mat K\vect e_c.
  \label{eq:reference_curvature_velocity}
\end{equation}
The three terms represent the desired-rate feedforward, boundary-contraction compensation, and curvature-error feedback, respectively. In particular, the $j$th component of the boundary-contraction term is
$-\dot B_j e_j/B_j$, which reinforces the error-correcting action because $\dot B_j\le0$ during the contraction phase. The diagonal matrix $\mat K$ has units of inverse time. Its diagonal structure also allows it to commute with the scaling matrix $\mat D_B$. Since $\dot B_j(T_j)=0$ and $B_j(t)$ remains constant for $t\ge T_j$, the boundary-contraction term vanishes after $T_j$. Consequently, the controller reduces to the desired-rate feedforward plus proportional curvature-error feedback once the terminal performance boundaries have been reached. Componentwise, Eq.~\eqref{eq:reference_curvature_velocity} becomes
\begin{equation}
  \dot c_{r,j}
  =
  \dot c_{d,j}
  -\frac{\dot B_j}{B_j}e_j
  +k_je_j,
  \quad j\in\{x,y\}.
  \label{eq:componentwise_reference_rate}
\end{equation}
Since $B_j(t)\ge\varepsilon_j>0$, $\dot B_j(t)$ is bounded, and $\abs{e_j(t)}<B_j(t)$, the boundary-contraction term $-\dot B_j e_j/B_j$ remains bounded. Hence, a bounded desired curvature rate $\dot{\vect c}_d$ implies a bounded reference curvature rate $\dot{\vect c}_r$. This analytical boundedness result does not by itself guarantee that the resulting tendon velocity commands satisfy the physical actuator limits.

Using the maximum boundary-contraction rate from Eq.~\eqref{eq:envelope_max_rate}, and assuming $\abs{\dot c_{d,j}}\le\bar v_{d,j}$, boundary invariance together with $\abs{e_j}<B_j\le B_{j0}$ yields the conservative componentwise bound
\begin{equation}
    \abs{\dot c_{r,j}}
    <
    \bar v_{d,j}
    +\frac{3(B_{j0}-\varepsilon_j)}{2T_j}
    +k_jB_{j0}.
\end{equation}
After mapping the calibrated tendon-speed limits to corresponding curvature-rate limits, this bound can be used to screen the parameter set $(B_{j0},\varepsilon_j,T_j,k_j)$. The resulting executed trajectory should
then be checked against sampled-data and reel-transmission constraints.

\subsection{Six-tendon velocity allocation and implementation}
The compatible tendon-velocity command is given by
\begin{equation}
  \dot{\vect q}=dL\mat C\dot{\vect c}_r.
  \label{eq:tendon_velocity_command}
\end{equation}
Using Eq.~\eqref{eq:C_properties}, its Euclidean norm satisfies
\begin{equation}
    \norm{\dot{\vect q}}_2
  =\sqrt3\,dL\norm{\dot{\vect c}_r}_2,
  \quad
  \abs{\dot q_i}
  \le dL\norm{\dot{\vect c}_r}_2.
\end{equation}
The first relation characterizes the total tendon-speed magnitude induced by the reference curvature rate, whereas the second provides a componentwise upper bound because each row of $\mat C$ has unit Euclidean norm. Once the tendon- and motor-speed limits are known, these relations provide necessary feasibility checks for the selected performance envelopes and desired trajectory.

The proposed allocation is the unique minimum-Euclidean-norm solution to the stated curvature-velocity matching problem. More generally, all tendon velocities that produce the same curvature velocity can be written as
\begin{equation}
  \dot{\vect q}
  =
  dL\mat C\dot{\vect c}_r
  +
  \left(
    \mat I_6-\frac{1}{3}\mat C\mat C^{\mathsf T}
  \right)\vect w,
  \quad
  \vect w\in\mathbb{R}^6.
  \label{eq:general_allocation}
\end{equation}
The proposed controller selects $\vect w=\vect0$, thereby eliminating the null-space component. A tension-aware allocator could instead exploit this null-space freedom while preserving the same curvature velocity.
However, the present allocation does not explicitly regulate tendon tension, prevent slack, or account for actuator saturation. The two terms in Eq.~\eqref{eq:general_allocation} belong to the orthogonal subspaces $\operatorname{range}(\mat C)$ and $\ker(\mat C^{\mathsf T})$, respectively. Consequently, their squared
Euclidean norms add, and any nonzero null-space component strictly increases the tendon-velocity norm. Hence, the choice $\vect w=\vect0$ yields the unique minimum-Euclidean-norm allocation. This is a minimum-velocity-norm result and does not imply minimum energy consumption or minimum tendon force.

The proposed command preserves the ideal antagonistic and compatibility relations,
\begin{equation}
    \dot q_4=-\dot q_1,\quad
    \dot q_5=-\dot q_2,\quad
    \dot q_6=-\dot q_3,\quad
    \dot q_1-\dot q_2+\dot q_3=0.
\end{equation}
These relations provide convenient online consistency checks. However, deviations from them may result from measurement or modeling errors and should not be interpreted as direct evidence of tendon tension.

If a calibrated actuation matrix is used in place of $dL\mat C$, the same matrix and sign convention must be used consistently for both curvature reconstruction and tendon-velocity allocation. Otherwise, a systematic mismatch may arise in the closed loop even when the calibrated matrix is full column rank. Independent shape measurements, rather than algebraic self-consistency alone, are therefore required to validate the calibrated kinematic model. For the implemented signals, $\hat{\vect c}_{\mathrm{shape}}$ is obtained by fitting a single-segment constant-curvature model to the measured backbone nodes and is used as the feedback signal. The measured tendon shortening is denoted by $\vect q_m$, from which $\hat{\vect c}_q=(3dL)^{-1}\mat C^{\mathsf T}\vect q_m$ is computed solely for model-based diagnostics. Figure~\ref{fig:control_architecture}
visually distinguishes these two signal paths.

At each control update in the OpenCR--MuJoCo implementation described in Section~\ref{sec:opencr-mujoco}, the following procedure is executed:
\begin{enumerate}[label=(\arabic*)]
  \item fit a single-segment constant-curvature centerline to the measured backbone nodes to obtain $\hat{\vect c}_{\mathrm{shape}}$;
  \item evaluate the desired curvature, performance envelopes, and shape-based tracking error
  $\hat{\vect e}_c=\vect c_d-\hat{\vect c}_{\mathrm{shape}}$;
  \item compute the reference curvature velocity from Eq.~\eqref{eq:reference_curvature_velocity} and the compatible tendon velocity command from Eq.~\eqref{eq:tendon_velocity_command}, followed by command integration and actuator-limit enforcement;
  \item record the shape-based normalized error $\hat{\vect\xi}$, actuator saturation, tendon tension, and the model-based diagnostic estimate $\hat{\vect c}_q=(3dL)^{-1}\mat C^{\mathsf T}\vect q_m$.
\end{enumerate}

The scalar sampled-data update used in the reduced-order hardware experiment is described separately in Section~\ref{sec:physical-experiments}. For the hardware implementation, the tendon-length rates satisfy
$\dot{\vect l}=-\dot{\vect q}$ and must be converted to motor commands using the calibrated reel radii and motor sign conventions. Individual command clipping can destroy the compatible six-tendon velocity pattern, while sampling, computational delay, and zero-order holding introduce deviations from the continuous-time identity $\dot{\vect c}=\dot{\vect c}_r$. The applied commands and saturation events are therefore recorded explicitly for subsequent analysis. Reel conversion and the extended implementation details are provided in \ref{sec:supp_commands}.

\subsection{Closed-loop performance analysis}
\label{sec:analysis}
Under the ideal actuation assumptions, substituting Eq.~\eqref{eq:tendon_velocity_command} into
Eq.~\eqref{eq:inverse_differential} and using Eq.~\eqref{eq:C_properties} yields the exact curvature-velocity relation
\begin{equation}
  \dot{\vect c}
  =
  \frac{1}{3dL}\mat C^{\mathsf T}
  \left(dL\mat C\dot{\vect c}_r\right)
  =\dot{\vect c}_r.
  \label{eq:exact_curvature_velocity}
\end{equation}
Since $\vect e_c=\vect c_d-\vect c$, its time derivative is $\dot{\vect e}_c=\dot{\vect c}_d-\dot{\vect c}$. Substituting Eq.~\eqref{eq:reference_curvature_velocity} and Eq.~\eqref{eq:exact_curvature_velocity} then yields
\begin{equation}
  \dot{\vect e}_c
  =
  \dot{\mat D}_B\mat D_B^{-1}\vect e_c
  -\mat K\vect e_c.
  \label{eq:error_dynamics}
\end{equation}
Substituting Eq.~\eqref{eq:error_dynamics} into Eq.~\eqref{eq:xi_derivative}, and using the commutativity of the diagonal matrices $\mat K$, $\mat D_B$, and $\dot{\mat D}_B$, gives
\begin{equation}
  \dot{\vect\xi}=-\mat K\vect\xi.
  \label{eq:closed_loop_dynamics}
\end{equation}
Equations~\eqref{eq:exact_curvature_velocity}--\eqref{eq:closed_loop_dynamics} establish the nominal normalized-error dynamics underlying the subsequent analysis.

\begin{theorem}[Closed-loop prescribed-performance properties] \label{thm:closed_loop_properties}
    Suppose Assumptions~\ref{ass:bending}--\ref{ass:execution} hold. For each $j\in\{x,y\}$, let $B_j(t)$ satisfy Eqs.~\eqref{eq:performance_envelope}--\eqref{eq:envelope_conditions}, let $\mat K=\diag(k_x,k_y)$ with $k_x,k_y>0$, and apply the controller in~\eqref{eq:reference_curvature_velocity} together with the compatible tendon-velocity allocation in~\eqref{eq:tendon_velocity_command}. Since $B_j(0)=B_{j0}$, the initial-feasibility condition in Eq.~\eqref{eq:envelope_conditions} is equivalent to $\abs{\xi_j(0)}<1$. Then, under the ideal closed-loop dynamics, the following properties hold:
    \begin{enumerate}[label=(\arabic*)]
        \item \emph{Boundary invariance:}
        \begin{equation}
            \abs{\xi_j(t)}<1,
            \quad
            \abs{e_j(t)}<B_j(t),
            \quad \forall\,t\ge0.
        \end{equation}
        \item \emph{Prescribed-time terminal-band entry:}
        \begin{equation}
            \abs{e_j(t)}<\varepsilon_j,
            \quad \forall\,t\ge T_j.
        \end{equation}
        In particular, with $T_c=\max\{T_x,T_y\}$, for all $t\ge T_c$,
        \begin{equation}
            \abs{e_x(t)}<\varepsilon_x,
            \quad
            \abs{e_y(t)}<\varepsilon_y,
            \quad
            \norm{\vect e_c(t)}
            <\sqrt{\varepsilon_x^2+\varepsilon_y^2}.
        \end{equation}
        \item \emph{Asymptotic convergence:}
        \begin{equation}
            \lim_{t\to\infty}\vect\xi(t)=\vect0,
            \quad
            \lim_{t\to\infty}\vect e_c(t)=\vect0.
        \end{equation}
    \end{enumerate}
\end{theorem}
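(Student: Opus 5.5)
The plan is to work entirely with the nominal normalized-error dynamics \eqref{eq:closed_loop_dynamics}, $\dot{\vect\xi}=-\mat K\vect\xi$, which were derived just before the statement from \eqref{eq:exact_curvature_velocity}, \eqref{eq:error_dynamics} and the derivative identity \eqref{eq:xi_derivative} of Lemma~\ref{lem:transformation_equivalence}.

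\textbf{Well-posedness.} Because $\mat K$ is diagonal, the system decouples into scalar linear equations $\dot\xi_j=-k_j\xi_j$. Their explicit solution is $\xi_j(t)=\xi_j(0)e^{-k_jt}$ for all $t\ge0$. I will first check that this solution is globally defined. The right-hand side of \eqref{eq:error_dynamics} is well defined for all time, since $\dot{\mat D}_B\mat D_B^{-1}$ is continuous and bounded. This follows from Lemma~\ref{lem:boundary_properties}: $B_j\in C^1$, $B_j\ge\varepsilon_j>0$, and $\dot B_j$ is bounded by \eqref{eq:envelope_max_rate}. Hence $\vect e_c$, and therefore $\vect\xi=\mat D_B^{-1}\vect e_c$, exist uniquely on $[0,\infty)$ with no finite escape.

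\textbf{Property (1).} The initial-feasibility condition $B_{j0}>\abs{e_j(0)}$ together with $B_j(0)=B_{j0}$ gives $\abs{\xi_j(0)}<1$. Then $\abs{\xi_j(t)}=\abs{\xi_j(0)}e^{-k_jt}\le\abs{\xi_j(0)}<1$ for all $t\ge0$. The equivalence \eqref{eq:constraint_equivalence} converts this into $\abs{e_j(t)}<B_j(t)$.

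As a cross-check consistent with Definition~\ref{def:barrier_function}, I will also note that along trajectories
\[
\dot V_B=\sum_j \frac{\xi_j\dot\xi_j}{1-\xi_j^2}=-\sum_j \frac{k_j\xi_j^2}{1-\xi_j^2}\le0 .
\]
So $V_B(\vect\xi(t))\le V_B(\vect\xi(0))<\infty$, which keeps $\vect\xi$ in a compact subset of $\mathcal X$. The explicit solution already suffices, however.

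\textbf{Property (2).} For $t\ge T_j$, Eq.~\eqref{eq:envelope_terminal_value} gives $B_j(t)=\varepsilon_j$. Combining this with property (1) yields $\abs{e_j(t)}<\varepsilon_j$. For $t\ge T_c=\max\{T_x,T_y\}$ both component bounds hold at once, and squaring and summing gives $\norm{\vect e_c(t)}<\sqrt{\varepsilon_x^2+\varepsilon_y^2}$.

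\textbf{Property (3).} The explicit solution gives $\xi_j(t)\to0$ exponentially. By \eqref{eq:transformation_inverse}, $e_j=B_j\xi_j$ with $B_j\le B_{j0}$, so $\abs{e_j(t)}\le B_{j0}\abs{\xi_j(0)}e^{-k_jt}\to0$.

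\textbf{Main obstacle.} The delicate point is not the estimates, which are elementary, but justifying the closed-loop reduction. Specifically, I must confirm that $\dot{\vect c}=\dot{\vect c}_r$ holds exactly under Assumption~\ref{ass:execution}, and that the configuration stays within the constant-curvature workspace of Assumption~\ref{ass:bending}. For the workspace question, I would use the boundedness of $\vect c_d$ together with $\abs{e_j}<B_{j0}$, which bounds $\vect c$. The reduction also uses the time-varying multiplier across $T_j$. There I will rely on the $C^1$ property of $B_j$ to ensure that \eqref{eq:xi_derivative} remains valid at the switching instant, so the solution of the linear ODE is classical on all of $[0,\infty)$.
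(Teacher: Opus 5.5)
Your proposal is correct and follows the paper's proof essentially step for step. Decouple into $\dot\xi_j=-k_j\xi_j$ and use the explicit solution $\xi_j(t)=\xi_j(0)e^{-k_jt}$ to get $\abs{\xi_j(t)}\le\abs{\xi_j(0)}<1$; then pass to $e_j=B_j\xi_j$ using $B_j>0$, $B_j=\varepsilon_j$ for $t\ge T_j$, and $B_j\le B_{j0}$ for the convergence claim. Your well-posedness remark and barrier-function check are harmless additions; the paper gives the $\dot V_B$ computation after the proof, as a complementary interpretation rather than part of the argument.
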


\begin{proof}
    Since $\mat K$ is diagonal, Eq.~\eqref{eq:closed_loop_dynamics} decouples componentwise as $\dot{\xi}_j=-k_j\xi_j$. Hence,
    \begin{subequations} \label{eq:xi_solution}
        \begin{align}
            \vect\xi(t)&=\exp(-\mat Kt)\vect\xi(0),\\
            \xi_j(t)&=\xi_j(0)\exp(-k_jt).
        \end{align}
    \end{subequations}
    Because $k_j>0$, $\exp(-k_jt)\le1$ for all $t\ge0$. Thus,
    \begin{equation}
        \abs{\xi_j(t)}
        =\abs{\xi_j(0)}\exp(-k_jt)
        \le\abs{\xi_j(0)}<1.
    \end{equation}
    By Lemma~\ref{lem:transformation_equivalence}, $e_j(t)=B_j(t)\xi_j(t)$, and by Lemma~\ref{lem:boundary_properties}, $B_j(t)>0$. It follows that
    \begin{equation}
        \abs{e_j(t)}
        =B_j(t)\abs{\xi_j(t)}
        <B_j(t),
        \quad t\ge0,
    \end{equation}
    which establishes boundary invariance under the initially feasible condition. For $t\ge T_j$, Lemma~\ref{lem:boundary_properties} gives $B_j(t)=\varepsilon_j$. Therefore,
    \begin{equation}
        \abs{e_j(t)}
        =B_j(t)\abs{\xi_j(t)}
        =\varepsilon_j\abs{\xi_j(t)}
        <\varepsilon_j,
        \quad t\ge T_j.
    \end{equation}
    Hence, each error component enters and remains within its prescribed nonzero terminal band by time $T_j$. In particular, for $t\ge T_c=\max\{T_x,T_y\}$, both componentwise bounds hold simultaneously, and therefore
    \begin{equation}
        \norm{\vect e_c(t)}^2
        =e_x^2(t)+e_y^2(t)
        <\varepsilon_x^2+\varepsilon_y^2.
    \end{equation}
    Thus,
    \begin{equation}
        \norm{\vect e_c(t)}
        <\sqrt{\varepsilon_x^2+\varepsilon_y^2},
        \quad t\ge T_c.
    \end{equation}
    This establishes prescribed-time entry into a nonzero terminal accuracy region, rather than finite-time convergence to the origin. Finally, Eq.~\eqref{eq:xi_solution} and \(k_j>0\) imply
    \begin{equation}
        \lim_{t\to\infty}\xi_j(t)=0.
    \end{equation}
    Since $0<B_j(t)\le B_{j0}$ and $e_j(t)=B_j(t)\xi_j(t)$,
    \begin{equation}
        \abs{e_j(t)}
        \le B_{j0}\abs{\xi_j(t)}
        \longrightarrow0.
    \end{equation}
    Thus, both components converge to zero, yielding
    \begin{equation}
        \lim_{t\to\infty}\vect\xi(t)=\vect0,
        \quad
        \lim_{t\to\infty}\vect e_c(t)=\vect0.
    \end{equation}
\end{proof}

Theorem~\ref{thm:closed_loop_properties} follows directly from the prescribed normalized-error dynamics. The barrier function introduced in Definition~\ref{def:barrier_function} provides a complementary interpretation of boundary invariance. It is nonnegative within the open square $\abs{\xi_j}<1$ and diverges as either component approaches its boundary. Along Eq.~\eqref{eq:closed_loop_dynamics},
\begin{equation}
  \dot V_B=
  -\sum_{j\in\{x,y\}}
  \frac{k_j\xi_j^2}{1-\xi_j^2}
  \le0.
  \label{eq:barrier_derivative}
\end{equation}
Hence, a finite initial barrier value cannot evolve to the infinite boundary, and $\dot V_B=0$ only when $\vect\xi=\vect0$. Moreover, using $-\ln(1-s)\ge s$ for $s\in[0,1)$ gives
\begin{equation}
    V_B\ge\frac{1}{2}\norm{\vect\xi}^2,
\end{equation}
which shows that every finite sublevel set of $V_B$ is bounded. Since $V_B$ is continuous on the open square, these sublevel sets are closed. Furthermore, $V_B(\vect\xi)\to\infty$ as any component approaches $\pm1$, so every finite sublevel set remains strictly inside the open
square and is therefore compact. In particular, if $V_B(t)\le V_B(0)$, then each logarithmic term is bounded by $2V_B(0)$, yielding
\begin{equation}
    \abs{\xi_j(t)}
    \le
    \sqrt{1-\exp[-2V_B(0)]}
    <1.
\end{equation}
This conservative estimate makes the separation from the prescribed boundary explicit.

\begin{remark}
    The condition $\abs{\xi_j(0)}<1$ is simply the normalized form of the initial-feasibility condition in Eq.~\eqref{eq:envelope_conditions} and does not constitute an additional assumption. Since no recovery mechanism is provided for states on or outside the prescribed boundary, initialization uncertainty should be accounted for when selecting $B_{j0}$. The terminal result guarantees entry into a nonzero accuracy region by $T_j$, rather than finite-time convergence to the origin. In particular, the ideal error satisfies
    \begin{equation}
        e_j(t)=B_j(t)\xi_j(0)\exp(-k_jt).
        \label{eq:error_solution}
    \end{equation}
    Because $B_j(t)>0$ is nonincreasing, the sign of any nonzero ideal error is preserved and its magnitude is nonincreasing. Thus, under the ideal model, the symmetric implementation produces no overshoot across zero, although it does not provide independently tunable positive and negative error bounds. For a time-varying reference, these properties remain conditional on exact desired-rate feedforward and ideal velocity execution. The prescribed time $T_j$ determines the terminal-band entry time, whereas $k_j$ determines the normalized-error decay rate. Neither parameter eliminates the actuator-rate constraints. The margin $m_j=1-\abs{\xi_j}$ is nondecreasing under the ideal dynamics and can therefore serve as a useful diagnostic quantity in physical experiments, where it should be considered together with actuator saturation and measurement uncertainty. Finally, because the boundary is positive and $C^1$, the normalized solution and the resulting curvature and compatible tendon velocity remain finite on every finite time interval under the ideal model, provided that $\dot{\vect c}_d$ is bounded.
\end{remark}

\begin{remark}
    If the actual execution deviates from the reference curvature velocity according to $\dot{\vect c}=\dot{\vect c}_r+\vect\delta_c$, the normalized-error dynamics become
    \begin{equation}
        \dot{\vect\xi}
        =-\mat K\vect\xi-\mat D_B^{-1}\vect\delta_c.
    \end{equation}
    Suppose that a platform-specific bound $\abs{\delta_{c,j}}\le\bar\delta_j$ is available. Evaluating the resulting scalar dynamics at the boundary points $\xi_j=\pm1$ yields the sufficient inward-pointing condition
    \begin{equation}
        \bar\delta_j<k_jB_j(t).
    \end{equation}
    A conservative time-uniform condition is therefore
    \begin{equation}
        \bar\delta_j<k_j\varepsilon_j.
    \end{equation}
    No experimentally verified platform-specific bound $\bar\delta_j$ is established in the present study. Accordingly, Theorem~\ref{thm:closed_loop_properties} provides nominal prescribed-performance guarantees and does not establish robust boundary invariance. Effects such as actuator saturation, friction, execution delay, tendon slack, and model mismatch are instead treated as practical considerations in Section~\ref{sec:numerical-simulation}.
\end{remark}

\begin{remark}
    The Cartesian curvature-vector error provides a unified error measure for both curvature magnitude and bending direction. In particular, let $\kappa_d=\norm{\vect c_d}$. By the reverse triangle inequality,
    \begin{equation}
        \abs{\kappa_d-\kappa}
        \le\norm{\vect e_c}.
        \label{eq:magnitude_error_bound}
    \end{equation}
    Hence, convergence of the Cartesian curvature-vector error directly implies convergence of the curvature-magnitude error. For the bending-direction error, assume $\kappa,\kappa_d\ge\kappa_{\min}>0$, and define
    \begin{equation}
        e_\alpha=\wrap(\alpha_d-\alpha),
        \quad
        \theta=\abs{e_\alpha}\in[0,\pi],
    \end{equation}
    where $\wrap(\cdot)\in[-\pi,\pi)$ denotes the principal-angle representative. By the law of cosines,
    \begin{align}
    \norm{\vect e_c}^2
    &=\kappa_d^2+\kappa^2-2\kappa_d\kappa\cos\theta\nonumber\\
    &=(\kappa_d-\kappa)^2
    +2\kappa_d\kappa(1-\cos\theta)\nonumber\\
    &=(\kappa_d-\kappa)^2
    +4\kappa_d\kappa\sin^2(\theta/2)\nonumber\\
    &\ge
    4\kappa_{\min}^2\sin^2(\theta/2).
    \label{eq:angle_error_geometry}
    \end{align}
    Since $\theta/2\in[0,\pi/2]$, this inequality yields
    \begin{equation}
        \abs{e_\alpha}
        \le
        2\arcsin\left[
        \min\left\{
        1,\frac{\norm{\vect e_c}}{2\kappa_{\min}}
        \right\}
        \right].
        \label{eq:angle_error_bound}
    \end{equation}
    The minimum with one ensures that the bound remains valid when $\norm{\vect e_c}>2\kappa_{\min}$, in which case it reduces to the trivial bound $\abs{e_\alpha}\le\pi$. The lower bound $\kappa,\kappa_d\ge\kappa_{\min}>0$ is essential for defining the bending-direction error. At the straight configuration, $\vect c=\vect0$, the bending direction is undefined. Therefore, for trajectories that cross the straight configuration, the Cartesian curvature-vector error remains well defined and should be used as the primary error measure, while the bending-direction bound in Eq.~\eqref{eq:angle_error_bound} applies only on the nonstraight domain.
\end{remark}

\subsection{Tip-position error bound}
\label{sec:tip-position-error-bound}
Using the tip-position Jacobian defined in Section~\ref{sec:tip-position-model}, define the compact and convex curvature domain $\Omega_c$ and the corresponding maximum Jacobian norm as
\begin{subequations} \label{eq:Mp_definition}
  \begin{align}
    \Omega_c&=\{\vect c\in\R^2:\norm{\vect c}\le\kappa_{\max}\},\\
    M_p&=\max_{\vect c\in\Omega_c}\norm{\mat J_p(\vect c)}_2.
  \end{align}
\end{subequations}
Since $\mat J_p$ is continuous on the compact set $\Omega_c$, the maximum is attained and $M_p<\infty$. For any $\vect c,\vect c_d\in\Omega_c$, let $\vect e_c=\vect c_d-\vect c$. By applying the fundamental theorem of calculus along the line segment connecting $\vect c$ and $\vect c_d$, one obtains
\begin{equation}
  \vect p(\vect c_d)-\vect p(\vect c)
  =
  \int_0^1
  \mat J_p \left(\vect c+s\vect e_c\right)\vect e_c\,\dd s.
  \label{eq:tip_integral}
\end{equation}
Since $\Omega_c$ is convex, the entire line segment $\vect c+s\vect e_c$, $s\in[0,1]$, remains in $\Omega_c$. Taking the Euclidean norm of \eqref{eq:tip_integral} and using the definition of $M_p$ yields
\begin{equation}
e_p=
\norm{\vect p(\vect c_d)-\vect p(\vect c)}
\le M_p\norm{\vect e_c}
<M_p\sqrt{B_x^2+B_y^2}.
\label{eq:tip_error_bound}
\end{equation}
Therefore, by defining
\begin{equation}
\varepsilon_p
=
M_p\sqrt{\varepsilon_x^2+\varepsilon_y^2},
\end{equation}
the prescribed curvature-error bounds imply
\begin{equation}
e_p(t)<\varepsilon_p,
\quad t\ge T_c,
\end{equation}
and $e_p(t)\to0$ as $\vect e_c(t)\to\vect0$. The value of $M_p$ depends on the arm length $L$, the prescribed curvature domain $\Omega_c$ (or equivalently $\kappa_{\max}$), and the selected matrix norm, and can therefore be evaluated numerically once these model parameters are specified.

\section{Numerical and reduced-order physical evaluation}
\label{sec:numerical-simulation}
This section presents four complementary layers of evidence. Section~\ref{sec:ideal_python} employs an ideal in-house Python model to isolate and verify the analytical chain. Section~\ref{sec:opencr-mujoco} evaluates the proposed method under shape-fitted feedback and examines representative physical failure modes in OpenCR--MuJoCo. Section~\ref{sec:application_example} considers a contact-free, application-inspired numerical case to demonstrate the practical relevance of the proposed framework. Finally, Section~\ref{sec:physical-experiments} reports a supervised one-dimensional reduced-order physical experiment, which serves as a proof-of-concept validation rather than a full-system experimental validation.

\subsection{In-house Python kinematic simulation}
\label{sec:ideal_python}
An explicit and reproducible fixed-step fourth-order Runge--Kutta (RK4) implementation is first used to isolate the analytical chain from dynamic and state-estimation effects. It evaluates the same cubic performance bounds $B_j$ and their derivatives $\dot B_j$, symmetric dual transformation, reference curvature velocity, six-tendon minimum-norm allocation, and continuously extended constant-curvature tip-position map as those used in Sections~\ref{sec:preliminaries}--\ref{sec:control}.

Five cases were considered, covering a fixed target, a time-varying reference, seven crossings of the straight configuration, a matched proportional baseline, and an application-inspired branch-alignment path. All five cases were completed without boundary violations. The corresponding full-window values of $\max_{t,j}|\xi_j(t)|$ were 0.6928, 0.1600, 0.1200, 0.6928, and 0.1600, respectively. The minimum-norm tendon-velocity map reconstructed the curvature with an error of $1.78\times10^{-15}~\mathrm{m}^{-1}$. For the fixed-target case, the proposed method and the matched baseline entered the prescribed terminal band at 1.704~s and 1.902~s, respectively. Figure~\ref{fig:ideal_summary} summarizes the fixed-target, time-varying, straight-crossing, and branch-alignment signals. The matched baseline is reported numerically rather than as a fifth curve panel because it serves as a controlled scalar comparison. 

\begin{figure}[htbp]
\centering
\includegraphics[width=0.48\textwidth]{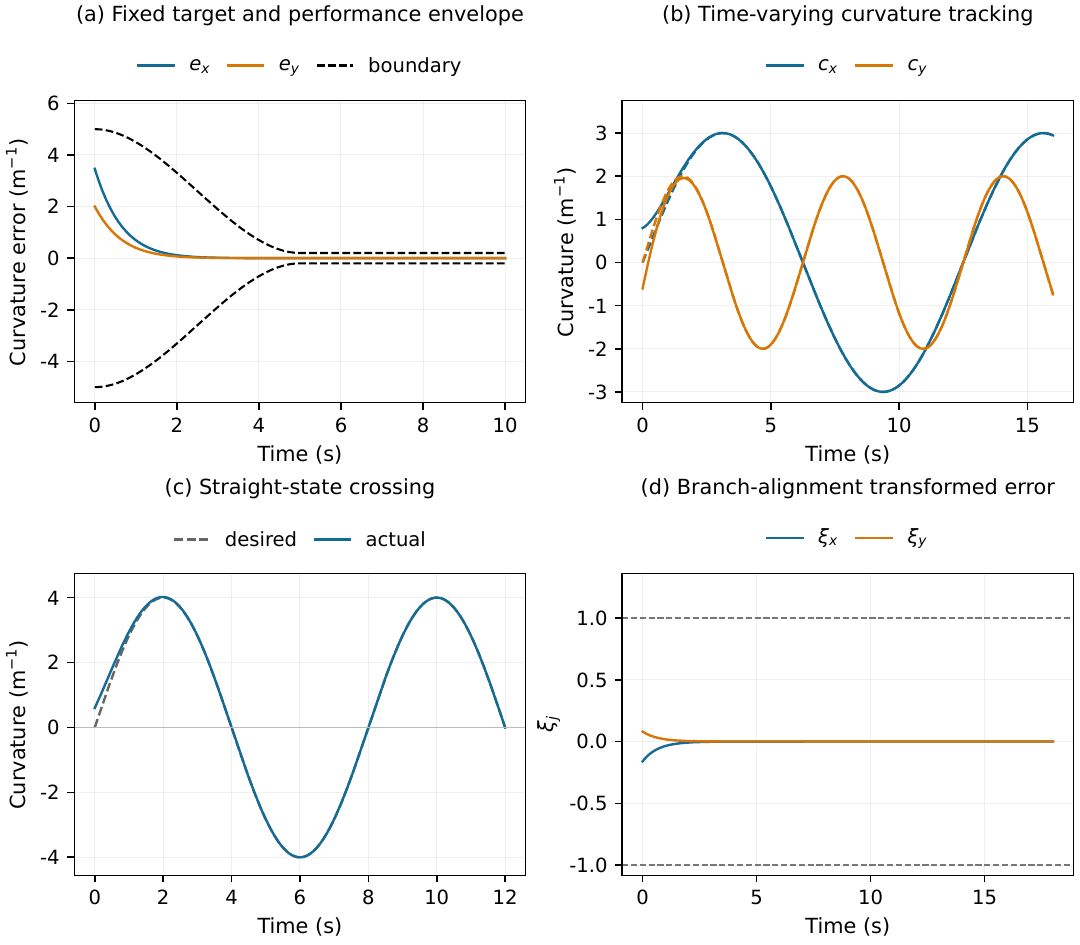}
\caption{In-house ideal kinematic layer showing fixed-target tracking, time-varying reference tracking, straight-state crossings, and branch-alignment tracking in the transformed-error space.}
\label{fig:ideal_summary}
\end{figure}

\subsection{OpenCR--MuJoCo model-based simulation}
\label{sec:opencr-mujoco}
Cases E1--E5 correspond to fixed-target tracking, time-varying reference tracking, straight-state crossing, matched-baseline comparison, and perturbation evaluation, respectively, within an internally defined 56-run benchmark suite based on the OpenCR--MuJoCo model. E6 is a separately evaluated branch-alignment run with left- and right-hold windows.

\subsubsection{Platform, estimation, and numerical settings}
\label{sec:simulation_platform}
The numerical study was conducted using the MuJoCo physics engine~\cite{todorov2012mujoco} through the external OpenCR--MuJoCo framework~\cite{shentu2026opencr}. OpenCR models the backbone as rigid links connected by elastic hinge pairs and routes six tendons through native MuJoCo actuators. The computational environment used Python 3.11.7 and MuJoCo 3.10.0. The fixed-base segment of length $L=0.300$~m was initially aligned with the local $+z$ axis. Nominal simulations were conducted under zero gravity and contact-free conditions using $N=16$ rigid links. The density, hinge stiffness, damping, and actuator parameters listed in Table~\ref{tab:simulation_parameters} are numerical settings rather than calibrated material or actuator properties. Specifically, $E$ denotes the effective numerical Young's modulus, $\rho_m$ denotes the effective numerical density used by the upstream OpenCR discretization to assign link masses and inertias, and $N\in\mathbb N$ denotes the number of rigid discretization links. Neither $E$ nor $\rho_m$ was identified from measured material properties. The discretization study in Table~\ref{tab:supp_discretization} evaluates each candidate model against the \(N=24\) reference. For \(N=16\), the differences in tip position and fitted curvature were 1.059~mm and \(0.0214~\mathrm{m}^{-1}\), respectively, while the model achieved a real-time factor of 36.3. Both discrepancies were below the predefined tolerances of 1.2~mm and \(0.1~\mathrm{m}^{-1}\), respectively. These tolerances were specified before the reported comparison as practical numerical-convergence thresholds for this study, rather than as sensor-resolution limits or industry-standard criteria. Based on these criteria, the \(N=16\) model was selected for the subsequent simulations.

\begin{table}[htbp]
    \centering
    \caption{Fixed numerical settings used in the OpenCR--MuJoCo simulations. The material-related entries are effective numerical values rather than calibrated properties.}
    \label{tab:simulation_parameters}
    \footnotesize
    \begin{tabular}{lll}
        \toprule
        Parameter & Value & Unit \\
        \midrule
        $L$ & 0.300 & m \\
        $d$ & 0.0150 & m \\
        $N$ & 16 & -- \\
        $E$ & 50 & GPa \\
        $\rho_m$ & $3.771\times10^{4}$ & kg m$^{-3}$ \\
        Hinge stiffness & 0.1309 & N m rad$^{-1}$ \\
        Hinge damping & 0.050 & N m s rad$^{-1}$ \\
        $\Delta t$ & 0.00200 & s \\
        Control period & 0.0200 & s \\
        Tendon gain & $1.00\times10^4$ & N m$^{-1}$ \\
        Pretension & 5.00 & N \\
        $\max_i |\dot q_i|$ & 0.0250 & m s$^{-1}$ \\
        $\boldsymbol B_0=[B_{x0},B_{y0}]^{\mathsf T}$ & $[5.00,5.00]^{\mathsf T}$ & m$^{-1}$ \\
        $\boldsymbol\varepsilon=[\varepsilon_x,\varepsilon_y]^{\mathsf T}$ & $[0.200,0.200]^{\mathsf T}$ & m$^{-1}$ \\
        $\boldsymbol T=[T_x,T_y]^{\mathsf T}$ & $[5.00,5.00]^{\mathsf T}$ & s \\
        $\boldsymbol K=\operatorname{diag}(k_x,k_y)$ & $\operatorname{diag}(1.50,1.50)$ & s$^{-1}$ \\
        \bottomrule
    \end{tabular}
\end{table}

\begin{table}[htbp]
\centering
\caption{Discretization study relative to \(N=24\).}
\label{tab:supp_discretization}
\footnotesize
\begin{tabularx}{\columnwidth}{lXXXX}
\toprule
N & Tip diff. (mm) & $\Delta\kappa\ (\mathrm{m}^{-1})$ & Shape RMSE (mm) & Real-time factor \\
\midrule
8 & 1.7066 & 0.0237 & 0.3468 & 82.0508 \\
12 & 1.1237 & 0.0379 & 0.3654 & 54.2026 \\
16 & 1.0588 & 0.0214 & 0.125 & 36.2731 \\
24 & 0.0 & 0.0 & 0.6263 & 19.0938 \\
\bottomrule
\end{tabularx}
\end{table}

The command convention was $q_i=l_{i,\mathrm{ref}}-l_i$, giving $l_{i,\mathrm{target}}=l_{i,\mathrm{ref}}-q_i$. In the OpenCR implementation, $l_{i,\mathrm{ref}}$ denotes the settled tendon length recorded after the
2-s pretension stage. Independent small-amplitude commands in $c_x$ and $c_y$ yielded a measured command-to-shape gain matrix of $\operatorname{diag}(1.0021,1.0016)$. Accordingly, no sign correction or post hoc coordinate rotation was applied. At each 50-Hz update, the measured backbone nodes were transformed into the fixed-base frame and fitted by nonlinear least squares to a single-segment constant-curvature centerline. The terminal tip node was excluded from the fit, and the estimate from the preceding update was used as the initial guess. The resulting shape-fitted curvature served as the controller feedback signal, while the curvature reconstructed from the measured tendon lengths was recorded only for diagnostic purposes. Across 12 static mapping cases, the shape-fit root-mean-square error (RMSE) ranged from 0.096 to 0.189~mm, while the root-mean-square (RMS) tip residual between the fitted constant-curvature model and MuJoCo ranged from 0.211 to 0.422~mm. Figure~\ref{fig:simulation_model} illustrates the modeled tendon layout, a representative single-segment constant-curvature fit, and the saved centerlines used to verify the curvature-estimation procedure.

\begin{figure}[htbp]
\centering
\includegraphics[width=0.48\textwidth]{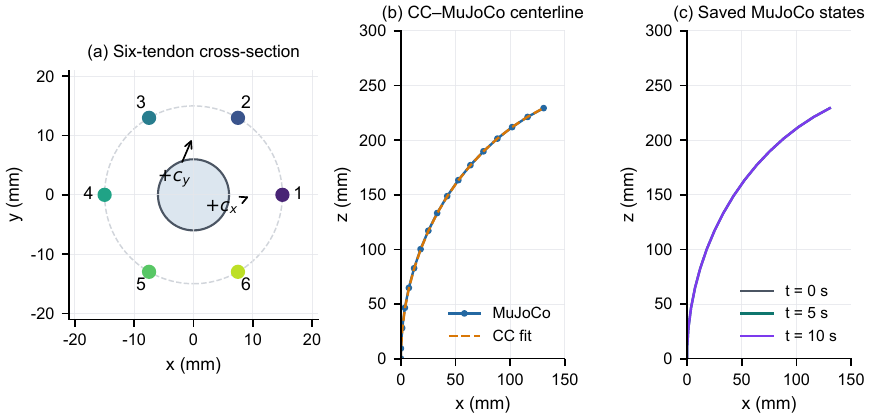}
\caption{OpenCR model and shape-estimation validation: (a) tendon numbering, (b) backbone nodes and the corresponding single-segment constant-curvature fit, and (c) three saved centerlines used for validation.}
\label{fig:simulation_model}
\end{figure}

\subsubsection{Fixed-target and time-varying tracking}
\label{sec:nominal_results}
For all OpenCR--MuJoCo tracking metrics, the curvature error and normalized error are computed from the constant-curvature shape-fitted feedback:
\begin{equation}
    \hat{\vect e}_c
    =
    \vect c_d-\hat{\vect c}_{\mathrm{shape}},
    \quad
    \hat{\vect\xi}
    =
    \mat D_B^{-1}\hat{\vect e}_c.
    \label{eq:opencr_tracking_errors}
\end{equation}
For compactness, the hats are omitted from plot labels when the feedback source is explicitly identified in the corresponding figure or text. The controller evaluated $\dot{\vect c}_d$, $B_j$, and $\dot B_j$ at a 20-ms control interval, corresponding to ten 2-ms physics steps per update. The integrated tendon-shortening commands were clipped to the specified position, rate, and actuator limits, and every sample affected by clipping was flagged. The recorded data include the desired, shape-fitted, and tendon-derived curvatures, component-wise performance bounds and shape-fitted normalized errors, requested and applied tendon commands, tendon forces, MuJoCo and constant-curvature tip positions, compatibility residuals, and saturation flags. All 56 internally defined benchmark runs were completed, including 33 perturbation runs, and none violated the prescribed performance boundaries. The separate E6 branch-alignment run is excluded from this count.

For the numerical workspace with $\kappa_{\max}=7.5~\mathrm{m}^{-1}$, numerical maximization, including the analytical limit at the straight configuration, yielded $M_p=0.0450~\mathrm{m}^2$. Define the measured
constant-curvature model residual as $\delta_{\mathrm{CC}}= \norm{\vect p(\hat{\vect c}_{\mathrm{shape}}) -\vect p_{\mathrm{MJ}}}$, where $\vect p_{\mathrm{MJ}}\in\mathbb R^3$ denotes the MuJoCo tip-position vector expressed in the fixed-base frame. Thus, $\delta_{\mathrm{CC}}$ is a scalar length measured in metres. The curvature-error contribution was quantified by $M_p\norm{\hat{\vect e}_c}$, while the physical-engine tip discrepancy was interpreted using the layered diagnostic
\begin{equation}
    \norm{\vect p(\vect c_d)-\vect p_{\mathrm{MJ}}}
    \le M_p\norm{\hat{\vect e}_c}+\delta_{\mathrm{CC}}.
\end{equation}
The first term propagates the shape-fitted curvature error through the ideal constant-curvature tip map, whereas the second term accounts for the measured residual between the single-segment constant-curvature fit to the MuJoCo backbone nodes and the MuJoCo tip position. This layered diagnostic should not be interpreted as a guaranteed position bound for physical hardware. Because $T_x=T_y=5~\mathrm{s}$, the curvature terminal time is $T_c=\max\{T_x,T_y\}=5~\mathrm{s}$. All terminal-window metrics reported below are therefore evaluated over $t\ge T_c$. Table~\ref{tab:nominal_results} reports both complete-window and terminal-window tip RMS values, thereby distinguishing the full-trajectory response from the post-entry behavior and avoiding the interpretation of the commanded E1 transition as steady-state accuracy.

\begin{table*}[htbp]
\centering
\caption{Nominal results computed over the complete and terminal sampling windows.}
\label{tab:nominal_results}
\footnotesize
\begin{tabular}{lllllll}
\toprule
Case & $\max_{t,j}|\hat{\xi}_j(t)|$ & RMS $\|\hat{\mathbf e}_c\|$ ($\mathrm{m}^{-1}$) & Full tip RMS (mm) & Tip RMS, $t\geq T_c$ (mm) & Entry (s) & Violations \\
\midrule
E1 & 0.6928 & 0.7221 & 31.34 & 0.3672 & 1.68 & 0 \\
E2 & 0.0217 & 0.0034 & 0.3433 & 0.318 & 0 & 0 \\
E3 & 0.2049 & 0.0282 & 1.305 & 1.417 & 0 & 0 \\
\bottomrule
\end{tabular}
\end{table*}

In Table~\ref{tab:nominal_results}, an entry of $0$ indicates that the initial recorded sample already satisfied the terminal-band criterion and that all subsequent samples remained within the prescribed band. E2 was preconditioned at its initial reference configuration, whereas E3 started
from the corresponding straight configuration. E1 tracked the fixed target $\vect c_d=4[\cos30^\circ,\sin30^\circ]^{\mathsf T}~\mathrm{m}^{-1}$ from the settled straight state. The maximum normalized error over the complete sampling window was $\max_{t,j}|\hat{\xi}_j(t)|=0.6928$, and both components entered and remained within their $0.2~\mathrm{m}^{-1}$ terminal bands at $1.68$~s. The complete-window tip RMS of $31.34$~mm includes the initial transition, whereas the terminal-window RMS over $t\ge T_c$ was only $0.3672$~mm, with a final tip-position error of $0.3696$~mm. The minimum tendon tension was
$2.814$~N, with neither tendon slack nor actuator saturation observed. Six additional target directions spaced by $60^\circ$ also entered the terminal bands within $1.70$--$1.76$~s without boundary violations or tendon slack. Figure~\ref{fig:fixed_target} summarizes the E1 curvature
response, component-wise performance bounds, shape-fitted normalized errors, and both complete-window and terminal-window tip metrics. 

\begin{figure*}[htbp]
\centering
\includegraphics[width=0.80\textwidth]{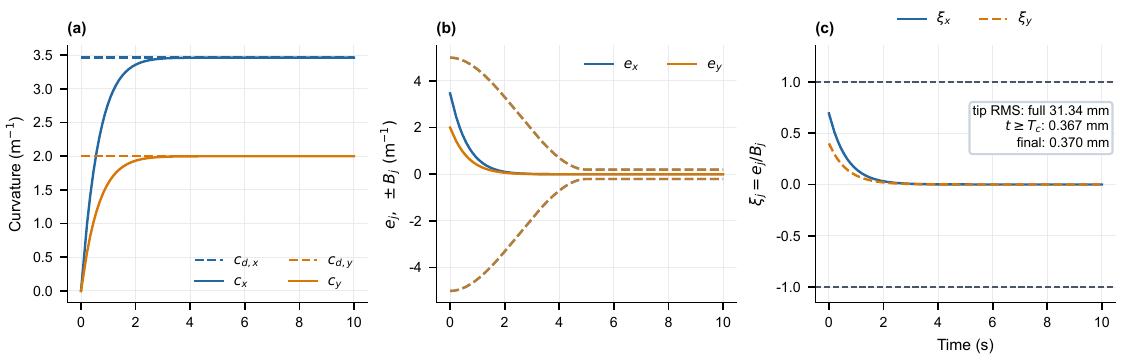}
\caption{Fixed-target E1: (a) reference and shape-fitted curvature, (b) shape-fitted errors with prescribed performance bounds, and (c) shape-fitted normalized errors with complete-window and terminal-window tip metrics.}
\label{fig:fixed_target}
\end{figure*}

E2 followed the three-period reference
\begin{equation}
    \vect c_d(t)=
    \begin{bmatrix}
        3.2+0.8\sin(2\pi\,0.125t)\\
        1.4+0.8\cos(2\pi\,0.125t)
    \end{bmatrix}
    \mathrm{m}^{-1}
\end{equation}
after preconditioning to its initial reference configuration. The maximum normalized error over the complete sampling window was $\max_{t,j}|\hat{\xi}_j(t)|=0.0217$, while the shape-fitted curvature-error RMS was $0.0034~\mathrm{m}^{-1}$. The tip RMS was $0.3433$~mm over the complete sampling window and $0.3180$~mm over the terminal window $t\ge T_c$. The peak applied tendon rate was $2.850$~mm/s. No actuator saturation, tendon slack, or prescribed-performance boundary violation was observed. Figure~\ref{fig:time_varying} shows the corresponding E2 trajectory tracking, terminal-window inset, and equal-aspect-ratio tip path.

\begin{figure*}[htbp]
\centering
\includegraphics[width=0.80\textwidth]{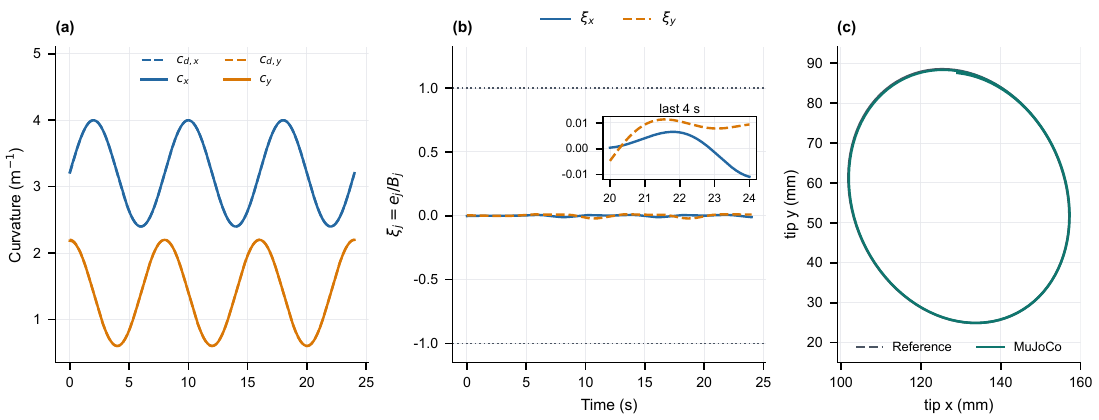}
\caption{Time-varying E2: (a) curvature tracking, (b) shape-fitted normalized errors with an inset over the final 4~s, and (c) equal-aspect-ratio desired and MuJoCo tip paths.}
\label{fig:time_varying}
\end{figure*}

\subsubsection{Straight-state crossing and controller comparison}
\label{sec:crossing_comparison}
E3 commanded $c_{d,x}=3\sin(2\pi t/4)~\mathrm{m}^{-1}$ and $c_{d,y}=0$ for $12$~s. The reference crossed the straight configuration $\vect c=\vect 0$ seven times, including the initial and
final endpoints. All signals other than the intentionally masked bending direction angle remained finite throughout the run. The direction error was omitted for $31$ of the $601$ samples because at least one curvature magnitude was below $\kappa_{\min}=0.25~\mathrm{m}^{-1}$. No angle value was interpolated across this undefined region. The maximum normalized error over the complete sampling window was $\max_{t,j}|\hat{\xi}_j(t)|=0.2049$. The complete-window tip RMS was $1.305$~mm, compared with $1.417$~mm over the terminal window $t\ge T_c$. The peak tendon rate was $21.45$~mm/s, with no actuator saturation, tendon slack, or prescribed-performance boundary violation observed. These results demonstrate the numerical regularity of the Cartesian-vector implementation at $\vect c=\vect0$, rather than the existence of a well-defined bending direction at the straight configuration. Figure~\ref{fig:straight_crossing} reports the E3 signed curvature, phase-aligned straight-state crossings, tendon shortening, and tip-position norm without plotting the undefined bending direction angle.

\begin{figure*}[htbp]
\centering
\includegraphics[width=0.80\textwidth]{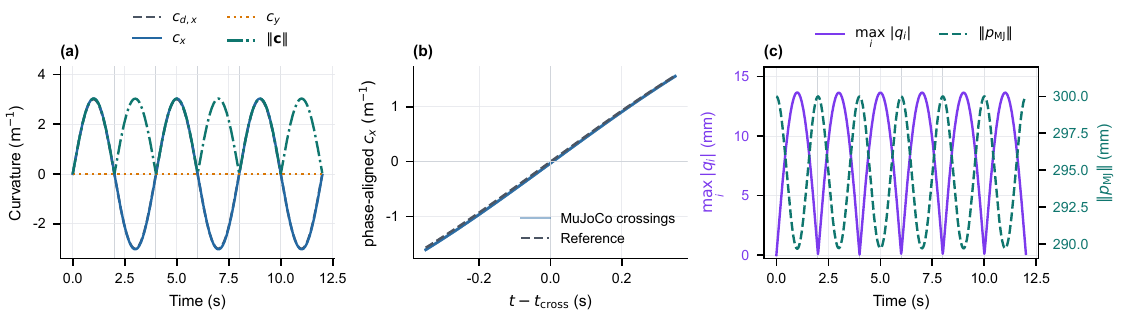}
\caption{Straight-state E3: (a) signed curvature and curvature magnitude, (b) phase-aligned zero crossings, and (c) tendon-shortening magnitude and MuJoCo tip-position norm. No bending-direction angle is plotted.}
\label{fig:straight_crossing}
\end{figure*}

The E4 baseline used $\dot{\vect c}_r=\dot{\vect c}_d+\mat K\hat{\vect e}_c$, while retaining
the same initial state, feedback estimator, control gain, model, sampling period, and command limits as the proposed controller. Both methods remained within the prescribed performance envelopes throughout the run. Compared with the baseline, the proposed method reduced the shape-fitted curvature-error RMS and full-window tip RMS by $1.99\%$ and shortened the terminal-band entry time from $1.86$ to $1.68$~s, corresponding to a $9.68\%$ reduction, at the cost of a $4.41\%$ increase in the integrated squared tendon-rate effort. The comparison therefore indicates a modest improvement in transient tracking performance rather than a stability
advantage arising from an unstable baseline. Table~\ref{tab:controller_comparison} provides the complete comparison metrics.

\begin{table*}[htbp]
\centering
\caption{Matched controller comparison.}
\label{tab:controller_comparison}
\footnotesize
\begin{tabular}{llllll}
\toprule
Controller & RMS $\|\hat{\mathbf e}_c\|$ ($\mathrm{m}^{-1}$) & Entry time (s) & Full tip RMS (mm) & $\int\|\dot{\mathbf q}\|^2dt$ ($\mathrm{m}^2\mathrm{s}^{-1}$) & $\max_{t,j}|\hat{\xi}_j(t)|$ \\
\midrule
Proposed & 0.7221 & 1.68 & 31.34 & 0.000776 & 0.6928 \\
Baseline & 0.7368 & 1.86 & 31.98 & 0.000743 & 0.6928 \\
\bottomrule
\end{tabular}
\end{table*}

\subsubsection{Perturbation and failure-mode evidence}
\label{sec:robustness_results}
E5 comprised 13 one-at-a-time (OAT) variations and 20 fixed-seed joint-random samples. Variations covered stiffness \(\pm20\%\), damping \(\pm30\%\), tendon gain and pretension \(\pm20\%\), controller radius mismatch \(\pm10\%\), gravity, measurement noise, and a two-cycle delay. Figure~\ref{fig:robustness} reports terminal-window (\(t\ge T_c\)) metrics rather than the uninformative initial maximum shared by all fixed-target runs. Across all 33 perturbation
runs, \(\max_{t\ge T_c,j}\abs{\hat\xi_j(t)}\) did not exceed 0.2135 and no performance boundary was violated. Gravity produced a terminal tip RMS of 28.03 mm and constant-curvature model residual of 28.46 mm, illustrating why the ideal curvature bound cannot directly constrain the MuJoCo tip without a residual term. Three joint-random samples reached zero minimum tension, with logged slack durations of 10.00--10.02\~s. These cases are retained in both the figure and the underlying data. Some samples also showed small saturation fractions up to 0.399\%, yet none crossed a prescribed error boundary. These empirical outcomes do not extend the nominal proof to unilateral tendon feasibility. The complete 13-row one-at-a-time table is provided as Table~\ref{tab:supp_robustness}.

\begin{figure*}[htbp]
\centering
\includegraphics[width=0.70\textwidth]{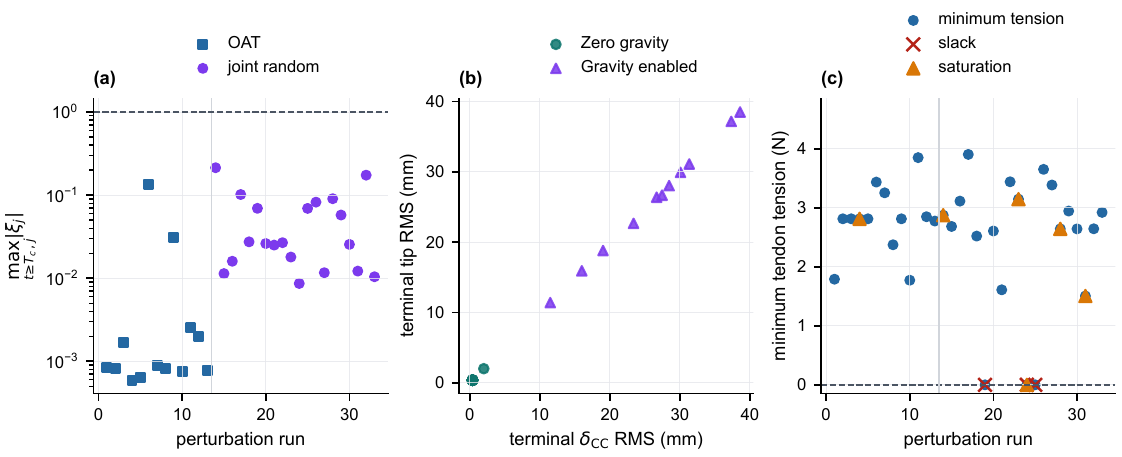}
\caption{E5 perturbation evidence over $t\ge T_c$: (a) terminal shape-fitted normalized-error maxima, (b) terminal tip error versus constant-curvature model residual, and (c) tension, slack, and saturation diagnostics. All failed-tension samples are retained.}
\label{fig:robustness}
\end{figure*}

\begin{table*}[htbp]
\centering
\caption{Complete one-at-a-time robustness results for \(t\ge T_c\).}
\label{tab:supp_robustness}
\footnotesize
\begin{tabularx}{\textwidth}{lXXXXXXX}
\toprule
Condition & $\max_{t\geq T_c,j}|\xi_j|$ & Tip RMS, $t\geq T_c$ (mm) & $\delta_{\mathrm{CC}}$ RMS, $t\geq T_c$ (mm) & Min. tension (N) & Slack (s) & Saturation & Violations \\
\midrule
Damping -30\% & 0.0008 & 0.3682 & 0.3696 & 1.7903 & 0.0 & 0.0 & 0 \\
Damping +30\% & 0.0008 & 0.3667 & 0.3688 & 2.8136 & 0.0 & 0.0 & 0 \\
Radius model -10\% & 0.0017 & 0.3659 & 0.369 & 2.8136 & 0.0 & 0.0 & 0 \\
Radius model +10\% & 0.0006 & 0.3679 & 0.3692 & 2.8136 & 0.0 & 0.0027 & 0 \\
Two-cycle delay & 0.0006 & 0.3677 & 0.3692 & 2.8136 & 0.0 & 0.0 & 0 \\
Gravity & 0.1358 & 28.0331 & 28.4599 & 3.4356 & 0.0 & 0.0 & 0 \\
Tendon gain -20\% & 0.0009 & 0.3663 & 0.3687 & 3.2536 & 0.0 & 0.0 & 0 \\
Tendon gain +20\% & 0.0008 & 0.368 & 0.3694 & 2.3736 & 0.0 & 0.0 & 0 \\
Measurement noise & 0.0312 & 0.369 & 0.3868 & 2.8132 & 0.0 & 0.0 & 0 \\
Pretension -20\% & 0.0008 & 0.3685 & 0.3696 & 1.7728 & 0.0 & 0.0 & 0 \\
Pretension +20\% & 0.0025 & 0.3942 & 0.3825 & 3.8503 & 0.0 & 0.0 & 0 \\
Young's modulus -20\% & 0.002 & 0.3875 & 0.3786 & 2.8486 & 0.0 & 0.0 & 0 \\
Young's modulus +20\% & 0.0008 & 0.3684 & 0.3696 & 2.7755 & 0.0 & 0.0 & 0 \\
\bottomrule
\end{tabularx}
\end{table*}

\subsection{Application-inspired branch alignment in a bifurcated conduit}
\label{sec:application_example}
E6 represents a generic contact-free branch-alignment problem in a schematic bifurcated conduit. This task may arise in industrial pipe or duct inspection, flexible-probe orientation, and flexible bronchoscope branch steering~\cite{song2026bronchoscope}. It commands a straight approach followed by left- and right-branch curvature holds. The flexible segment provides curvature steering only, while axial insertion would require an external linear module and is not modeled in this study. Over the complete E6 run, the componentwise maxima were
\(\max_t\abs{\hat\xi_x(t)}=0.0841\) and
\(\max_t\abs{\hat\xi_y(t)}=0.0695\), with zero violations, saturation, and
slack. These values are the \(x\)- and \(y\)-component maxima, not metrics for
the left- and right-hold windows. Shape-fit RMSE was 0.0984 mm, MuJoCo tip RMS was 0.514 mm, peak tendon rate was 9.832 mm/s, and minimum tension was 2.803 N. During the two hold windows,
curvature/tip/direction RMS were
\(8.87\times10^{-4}~\mathrm{m}^{-1}\)/0.380 mm/\(0.0142^\circ\) and
\(2.80\times10^{-3}~\mathrm{m}^{-1}\)/0.463 mm/\(0.0428^\circ\).
Figure~\ref{fig:application_branch}(a) overlays the archived left/right desired centerlines to make the branch-selection intent explicit; panels (b) and (c) compare each target constant-curvature centerline with the single-segment constant-curvature fit to MuJoCo backbone nodes and mark the measured MuJoCo tip. That shape fit closes the controller loop, whereas tendon-derived curvature remains diagnostic only.

\begin{figure*}[htbp]
\centering
\includegraphics[width=0.70\textwidth]{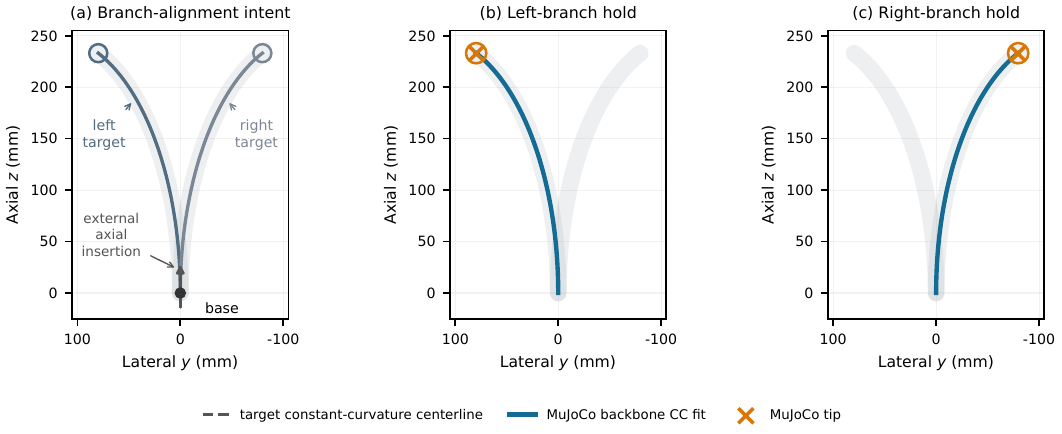}
\caption{Contact-free OpenCR--MuJoCo E6 in a schematic bifurcated conduit:
(a) left/right target-branch alignment intent formed from the archived desired
centerlines, (b) left-branch hold, and (c) right-branch hold. The hold panels
compare the target constant-curvature centerline with the shape fit to MuJoCo backbone nodes
and mark the MuJoCo tip. The pale Y-shaped guide is schematic, and the base arrow
denotes axial insertion that is not modeled.}
\label{fig:application_branch}
\end{figure*}

\subsection{Reduced-order physical experiment}
\label{sec:physical-experiments}
\paragraph{Platform and measurement} The available physical platform is a two-section, four-channel tendon-driven robot, whereas Sections~\ref{sec:preliminaries}--\ref{sec:control} consider a single-section plant with six tendons and a two-dimensional curvature vector. The physical experiment therefore adopted a reduced antagonistic actuation scheme,
\(\left[M_1,M_2,M_3,M_4\right]\trans=[1,-1,1,-1]\trans u\), selected the upper section, and regulated a single baseline-referenced signed planar-curvature component. The six-tendon minimum-norm allocation was not implemented on the physical platform. Here, \(u\) denotes a motor-command coordinate rather than the bending angle of the robot. Six formal runs were conducted using 12-s motor target-command trajectories updated at 2~Hz, resulting in 24 command updates per run. Continuous encoder-based motor-angle trajectories were not recorded. Instead, one GT3 single-turn encoder feedback snapshot was retained before motion and one after the robot returned to its initial configuration for each formal run.

A fixed iPhone camera and nine physical markers provided an independent measurement of the robot shape. Markers \(\texttt{R0}\) and \(\texttt{R1}\) defined the image translation, rotation, and scale references. Markers \(\texttt{B0}\), \(\texttt{U1}\), \(\texttt{U2}\), and \(\texttt{J}\) defined the selected upper section, while \(\texttt{J}\), \(\texttt{L1}\), \(\texttt{L2}\), and \(\texttt{T}\) provided a separate diagnostic measurement of the lower section that was not used for control. The marker coordinates were exponentially smoothed with a coefficient of 0.42. The selected upper-section markers were fitted to a single-segment constant-curvature arc using robust nonlinear least squares. The curvature feedback was then obtained as the median of the valid upper-section curvature estimates over the latest 0.2~s. Although the physical platform contains two mechanical sections, the reported control experiment considered only one section and reconstructed a single signed planar-curvature component using a single-segment constant-curvature approximation. No piecewise-constant-curvature control model was introduced. IMU pitch was used only as auxiliary posture information. The control and computation, actuation, and sensing and feedback modules, together with their implemented information flows, are shown in Figure~\ref{fig:hardware_platform_pass7}.

\begin{figure}[htbp]
\centering
\includegraphics[width=0.50\textwidth]{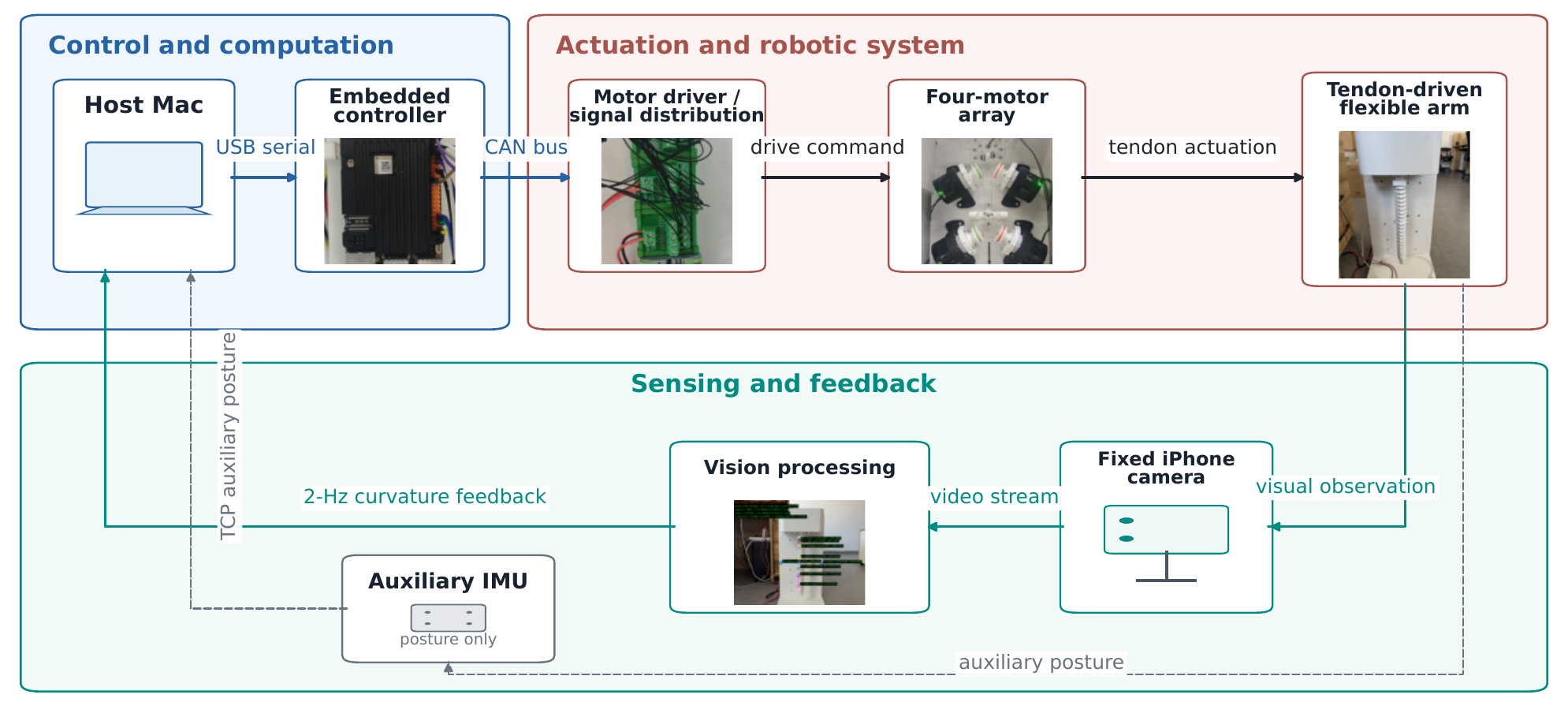}
\caption{Architecture and information flow of the reduced-order physical
platform. The host computer sends motor-target commands through the embedded
controller and the motor-drive chain to the four-motor tendon actuation system.
A fixed iPhone provides the primary visual stream used for curvature
reconstruction, whereas the IMU is retained only as an auxiliary posture signal.}
\label{fig:hardware_platform_pass7}
\end{figure}

\paragraph{Calibration and implementation} Six open-loop calibration trials were conducted at motor-command levels of 10 and 15 degrees, with three repetitions at each level. The upper section was selected for the subsequent control experiment because its higher response signal-to-noise ratio, better repeatability, and substantially lower inter-section coupling outweighed its somewhat larger arc-fitting residual. Over the two directly tested command levels, the local calibration was described by
\begin{align}
\kappa_{\mathrm{rel}}&=g u+b, \notag \\
g&=-0.05575~\mathrm{m}^{-1}/\text{motor-command degree}, \label{eq:hardware_local_calibration} \\
b&=-0.02460~\mathrm{m}^{-1},\quad R^2=0.9979. \notag
\end{align}
Because \(\kappa_{\mathrm{rel}}\) is defined relative to the pre-command baseline of each trial, \(u=0\) does not constitute an independent third calibration level. The formal target \(\kappa_d=-0.465~\mathrm{m}^{-1}\) corresponds to \(u_{\mathrm{eq}}=(\kappa_d-b)/g=7.90\) motor-command degrees, which requires a modest extrapolation below the directly tested range of 10--15 degrees. The resulting fit is therefore used only to determine the local input sign and gain around the tested operating range and is not intended as a global model of the robot. Five frames from the loading interval of the accepted calibration trial \texttt{CAL15\_R2} are shown in Figure~\ref{fig:hardware_motion_frames}.

\begin{figure*}[htbp]
\centering
\begin{minipage}[t]{0.15\textwidth}
\centering
\includegraphics[width=\linewidth]{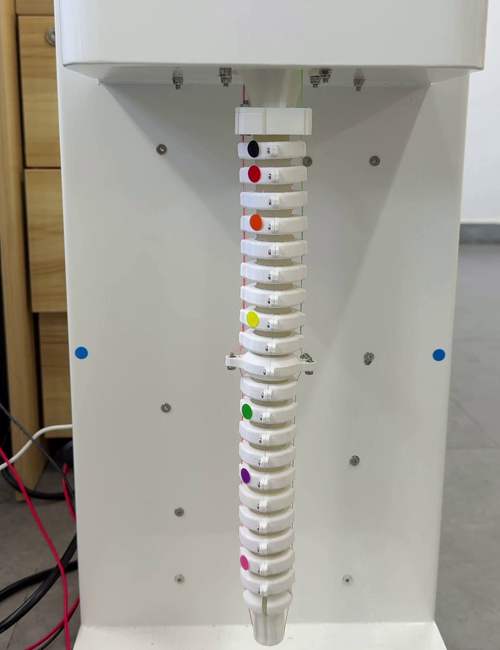}\\[-0.5ex]
\small (a) baseline
\end{minipage}\hfill
\begin{minipage}[t]{0.15\textwidth}
\centering
\includegraphics[width=\linewidth]{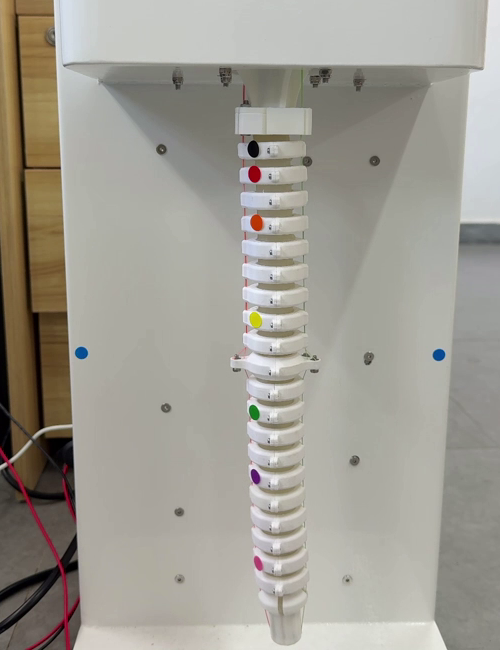}\\[-0.5ex]
\small (b) 0.7 s
\end{minipage}\hfill
\begin{minipage}[t]{0.15\textwidth}
\centering
\includegraphics[width=\linewidth]{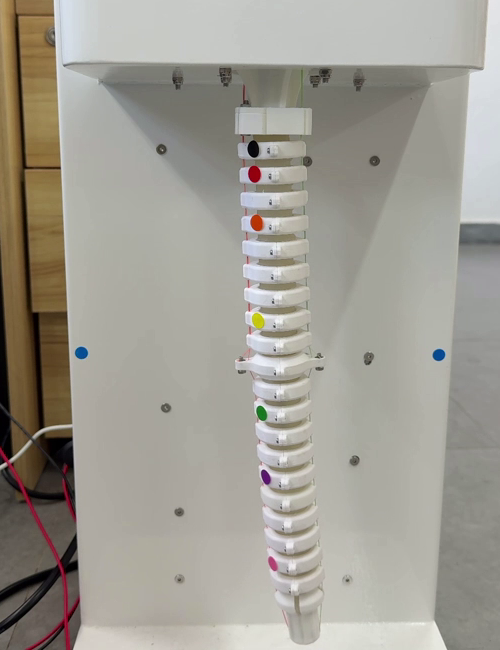}\\[-0.5ex]
\small (c) 1.2 s
\end{minipage}
\hfill
\begin{minipage}[t]{0.15\textwidth}
\centering
\includegraphics[width=\linewidth]{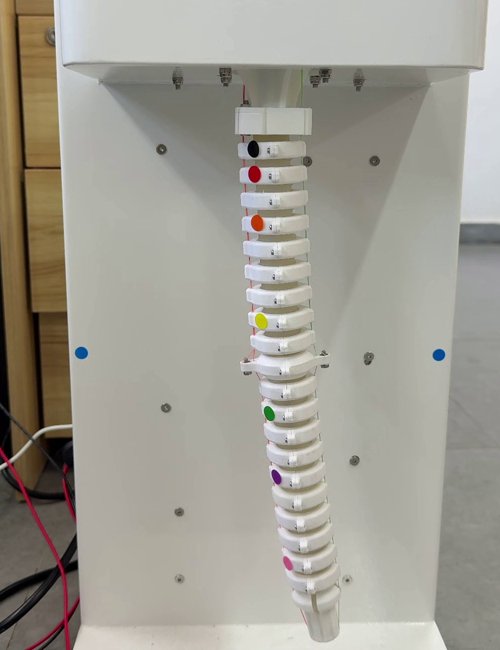}\\[-0.5ex]
\small (d) 1.8 s
\end{minipage}
\hfill
\begin{minipage}[t]{0.15\textwidth}
\centering
\includegraphics[width=\linewidth]{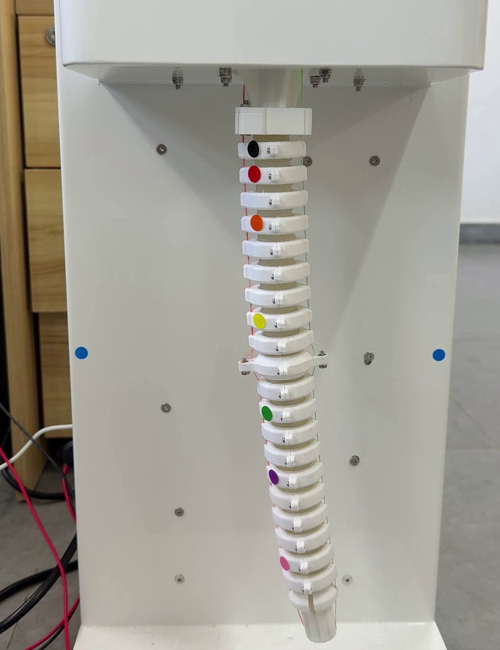}\\[-0.5ex]
\small (e) 2.4 s
\end{minipage}
\caption{Unannotated camera frames from the loading interval of accepted
15-motor-command-degree calibration trial \texttt{CAL15\_R2}: (a) the
pre-command baseline at \(u=0\); (b)--(e) intermediate rightward bending at
\(t=0.7\), \(1.2\), \(1.8\), and \(2.4~\mathrm{s}\) after command onset.}
\label{fig:hardware_motion_frames}
\end{figure*}

Trial P2 was selected to illustrate the measurement chain because it retained all 24 scheduled control updates and 360 valid upper-section vision samples over the complete 0--12~s tracking interval. The corresponding saved visual sampling rate was 29.99~Hz. Figure~\ref{fig:hardware_measurements_teacher}(b) subtracts the initial coordinate of each marker only and does not rescale the recorded displacements.

\begin{figure}[htbp]
\centering
\includegraphics[width=0.50\textwidth]{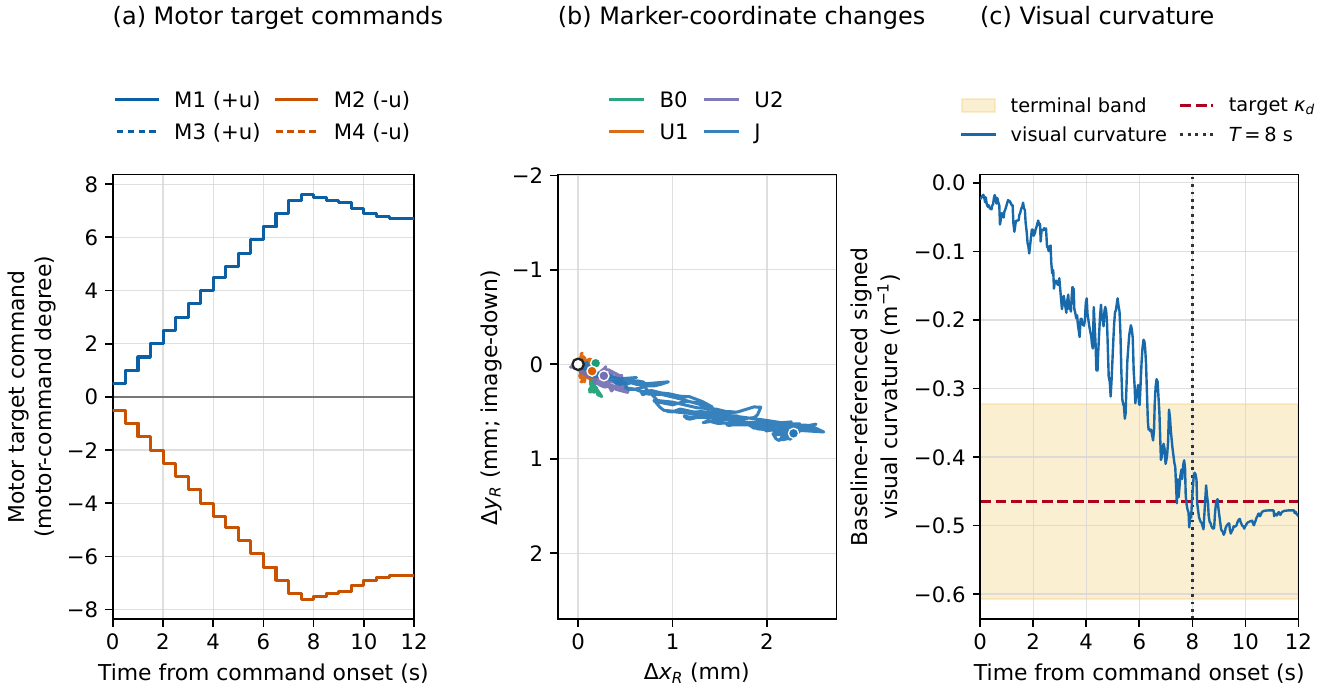}
\caption{Recorded measurement chain for accepted formal trial P2: (a) four
motor target-command trajectories, with \(M_1=M_3=+u\) and \(M_2=M_4=-u\);
(b) unscaled \(\texttt{R0}/\texttt{R1}\)-reference-frame changes of the
\(\texttt{B0}\), \(\texttt{U1}\), \(\texttt{U2}\), and \(\texttt{J}\)
marker coordinates relative to \(t=0\) (open common origin and filled final
points); and (c) the corresponding baseline-referenced signed visual
curvature, target, terminal band, and prescribed time \(T=8~\mathrm{s}\).
Panel (a) shows commanded motor targets, not continuous encoder-measured motor
angles.}
\label{fig:hardware_measurements_teacher}
\end{figure}

At sampled time \(t_k\), the filtered visual curvature gives
\(e_{h,k}=\kappa_d-\hat\kappa_k\), while
\(B_{h,k}=B_h(t_k)\) and \(\dot B_{h,k}=\dot B_h(t_k)\) evaluate the same
analytic cubic envelope for both controllers. The hardware scalar normalized
error is \(\xi_{h,k}=e_{h,k}/B_{h,k}\), abbreviated as \(\xi\) in
Figure~\ref{fig:hardware_results_pass7} and Table~\ref{tab:hardware_results_pass8}.
The proposed update uses
\(\dot\kappa_{r,k}=\dot\kappa_{d,k}-(\dot B_{h,k}/B_{h,k})e_{h,k}+k e_{h,k}\),
whereas the matched baseline uses
\(\dot\kappa_{r,k}=\dot\kappa_{d,k}+k e_{h,k}\). With
\(\Delta t=0.5~\mathrm{s}\), the recorded code implements
\begin{align}
  \dot u_k^\star&=\dot\kappa_{r,k}/g, &
  \dot u_k&=\operatorname{clip}_{[-1,1]}(\dot u_k^\star),\nonumber\\
  \Delta u_k&=\operatorname{clip}_{[-0.5,0.5]}(\Delta t\,\dot u_k), &
  \tilde u_{k+1}&=\operatorname{clip}_{[0,12]}(u_k+\Delta u_k),\nonumber\\
  u_{k+1}&=\mathcal Q_{\mathrm{proto}}(\tilde u_{k+1};u_k), &
  \mathbf M_{k+1}&=[1,-1,1,-1]^{\mathrm T}u_{k+1}.
  \label{eq:hardware_discrete_update}
\end{align}
The rate, step, and position clip intervals have units of motor-command
degree/s, motor-command degree, and motor-command degree, respectively.
The local intercept \(b\) is used in calibration and the static-equivalent
command calculation, but the derivative inversion in
Eq.~\eqref{eq:hardware_discrete_update} uses \(g\). The value
\(\mathcal Q_{\mathrm{proto}}\) selects the nearest reachable point on the
0.1-motor-command-degree grid supported by the legacy serial-command protocol,
subject to the 0--12 motor-command-degree position limit and the 0.5-motor-command-degree
step limit from \(u_k\). It rejects payloads containing a premature legacy
frame terminator and chooses the lower magnitude on an exact tie. The applied
command is held until the next 2-Hz outer-loop update.

\begin{figure}[htbp]
\centering
\includegraphics[width=0.48\textwidth]{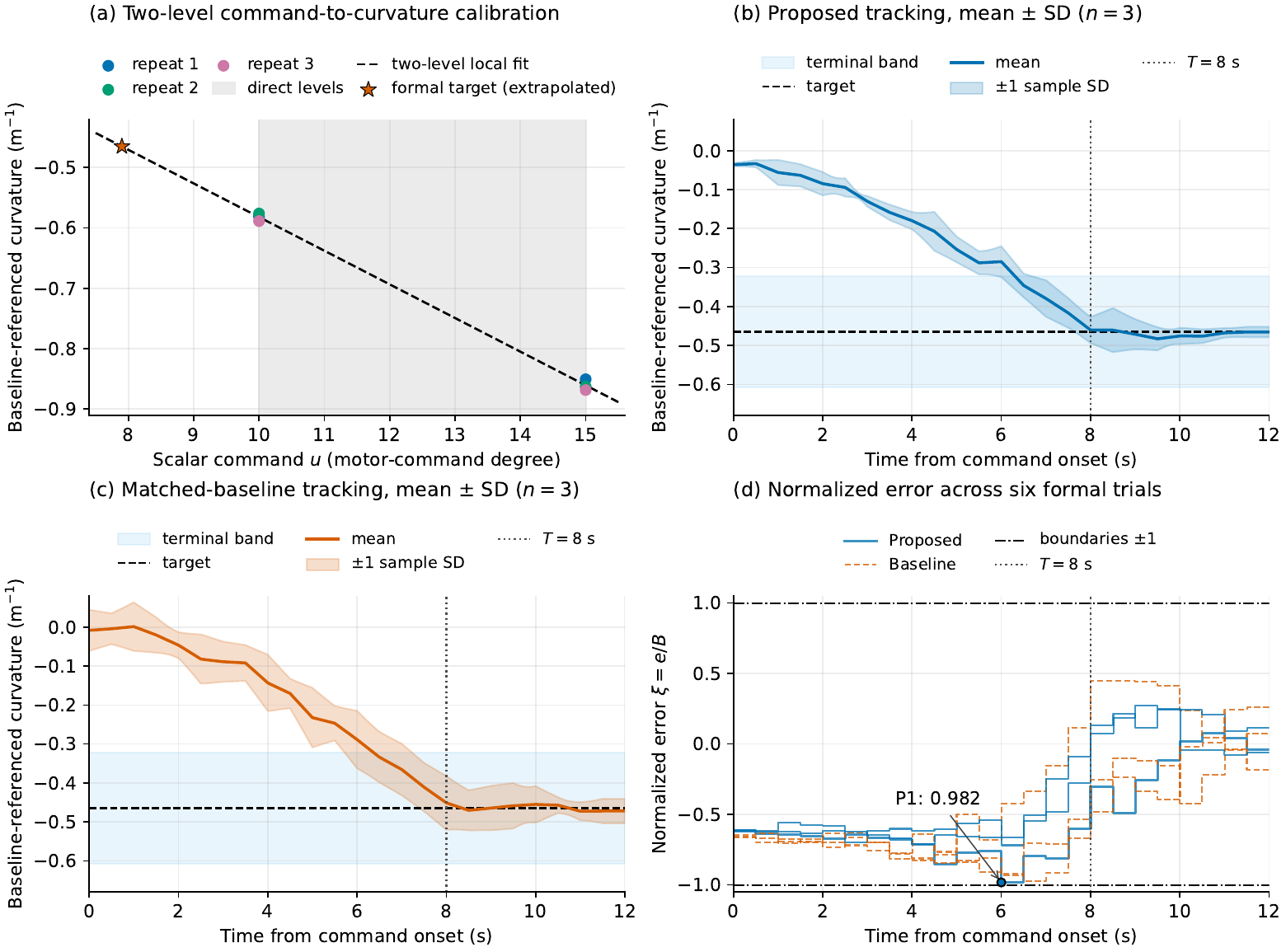}
\caption{Reduced-order hardware results: (a) two-level local
command-to-curvature calibration based only on the directly tested 10/15
levels; (b) proposed and (c) matched-baseline tracking shown as mean \(\pm\)
sample SD (\(n=3\)); and (d) normalized errors for all six formal trials with
the \(\pm1\) boundaries. All six trials entered the terminal band before
\(T=8~\mathrm{s}\).}
\label{fig:hardware_results_pass7}
\end{figure}

Each run used a 12-s post-onset tracking interval with 24 updates and froze
\begin{equation}
    B_{h,0}=1.25\abs{e_{h,0}}+4\sigma_{\mathrm{base}},\quad
  \varepsilon_h=\max\{4\sigma_{\mathrm{base}},0.10\abs{\kappa_d}\}.
\end{equation}
The baseline noise level is defined as
$
\sigma_{\mathrm{base}}
=
\max\left\{
\sigma_{H0},
s_1\left(\{\widetilde\kappa_{0.2}(t_i)\}_{i=1}^{N}\right)
\right\}.
$
Here, $\widetilde\kappa_{0.2}$ denotes the median valid visual curvature of the selected section over the latest 0.2~s after exponential smoothing of the marker coordinates with a coefficient of 0.42. The quantity $s_1$ denotes the sample standard deviation ($\mathrm{ddof}=1$) of these values sampled every 0.05~s during the separate 3-s pre-motion baseline of each run.
The fixed H0 floor is
\(\sigma_{H0}=0.0355617~\mathrm{m}^{-1}\), defined in the metric code as the
maximum upper-section curvature sample SD (\(\mathrm{ddof}=1\)) across three
separate approximately 60-s read-only static H0 runs. It was frozen in the
calibration file before formal testing and shared by all six accepted formal
runs. Each run independently computed and then froze its $\sigma_{\mathrm{base}}$ and $B_{h,0}$ according to the maximum rule above. In all six accepted runs, the H0 floor was the active bound, resulting in a recorded $\sigma_{\mathrm{base}}$ of $0.0355617~\mathrm{m}^{-1}$.
The identical
rule was used for the proposed controller and matched baseline. The campaign used $\varepsilon_h=0.142~\mathrm{m}^{-1}$, $T=8~\mathrm{s}$, and $k=0.5~\mathrm{s}^{-1}$. The target, filtering procedure, command limits, and return routine were otherwise identical across all runs.
The firmware speed field was 100, but
its physical unit was not independently calibrated.

\paragraph{Closed-loop results and scope} All six formal trials entered the terminal band before $T=8~\mathrm{s}$. The latest entry occurred at $7.502~\mathrm{s}$. Every
measured normalized-error trajectory remained inside the boundary, with the
largest run-level value \(\max\abs{\xi_h}=0.9821\) and zero observed violations
across the six trials. The proposed and matched-baseline terminal curvature
RMSE values were \(0.0267\pm0.0054\) and
\(0.0396\pm0.0061~\mathrm{m}^{-1}\), respectively, giving a descriptive
reduction of 32.5\%. Their mean terminal-band entry times were nearly identical,
and integrated squared command rates were also similar. Values in Table~\ref{tab:hardware_results_pass8} are reported as the mean $\pm$ sample SD across three runs for each controller. No statistical significance is claimed.

\begin{table}[htbp]
\centering
\caption{Descriptive reduced-order hardware comparison.}
\label{tab:hardware_results_pass8}
\footnotesize
\setlength{\tabcolsep}{2.2pt}
\begin{tabularx}{\columnwidth}{@{}>{\raggedright\arraybackslash}Xcc@{}}
\toprule
Metric (unit) & Proposed & Matched baseline \\
\midrule
Terminal-band entry (s) & 6.668 $\pm$ 0.288 & 6.670 $\pm$ 0.763 \\
Terminal curvature RMSE (m$^{-1}$) & 0.0267 $\pm$ 0.0054 & 0.0396 $\pm$ 0.0061 \\
Maximum $|\xi|$ (--) & 0.804 $\pm$ 0.155 & 0.887 $\pm$ 0.103 \\
Final signed bias (m$^{-1}$) & 0.0016 $\pm$ 0.0123 & 0.0023 $\pm$ 0.0191 \\
Return residual (m$^{-1}$) & 0.0660 $\pm$ 0.0230 & 0.0676 $\pm$ 0.0208 \\
Integrated squared command rate ((motor-command degree)$^2$/s) & 7.178 $\pm$ 0.488 & 7.246 $\pm$ 0.547 \\
\bottomrule
\end{tabularx}
\parbox{0.98\columnwidth}{\vspace{2pt}\scriptsize Values are mean $\pm$ sample SD ($\mathrm{ddof}=1$; $n=3$/controller), with no significance test. Entry is the first 2-Hz sample after which $|e_h|\leq\varepsilon_h$ holds for all remaining tracking samples. Terminal RMSE is computed over $t\in[8,12]$~s, while bias is defined as the mean $e_h$ over the final 1.0-s control window. Return residual is the absolute difference between the mean upper-section returned curvature over the final 1.0~s and the frozen calibration-baseline curvature, defined as the median of the six calibration-run pre-command baseline means. Effort is $\sum_k(\Delta u_k/\Delta t_k)^2\Delta t_k$ for applied $u_k$ over the full tracking record.}
\end{table}

Accepted runs spanned 15.69--29.68 processed frames/s, all above the fixed
15-frames/s validity threshold, while the asynchronous 2-Hz outer loop retained
all 24 scheduled updates. Technically invalid or safety-aborted attempts were
retained in the archive and excluded according to the documented validity
criteria. One
retained run produced a runtime return warning, while the offline final-window
criterion remained satisfied. The
evidence supports supervised one-dimensional implementability and measured
envelope monitoring only, not full six-tendon allocation, two-dimensional
curvature control, tension or slack behavior, robust safety, or deployment.

\subsection{Limitations and practical implications}
The two numerical layers answer different questions. The in-house model verifies the
implemented algebra, whereas OpenCR--MuJoCo reveals estimator residuals,
gravity sensitivity, saturation, and unilateral-tendon failure modes. Across the internally defined 56-run benchmark suite, the mean constant-curvature fit RMSE was 0.171 mm under zero gravity and 8.646 mm with gravity enabled. The corresponding constant-curvature--MuJoCo tip residuals were 0.393 mm and 23.251 mm, respectively.
Twelve runs contained saturation and
three contained slack. These results support the narrow nominal theorem and
simultaneously show why it cannot be read as a robust hardware guarantee. Practical deployment therefore requires workspace-specific constant-curvature model validation,
gravity/load compensation, calibrated rate and displacement margins,
positive-tension monitoring, and independent backbone shape sensing.
Friction, hysteresis, contact, torsion, extension, and parameter drift must
be incorporated into a revised model or bounded-disturbance analysis. The physical layer narrows but does not close these gaps. It coordinates four
channels through one scalar input rather than six tendons in two-dimensional
curvature space, tests one target direction, uses a two-level local calibration
with modest extrapolation below the direct levels, and contains only three runs
per controller. The fixed-camera estimator exhibited run-to-run variation in processing rate. The platform provided no tension sensing or multi-turn absolute-position measurement, and the physical unit of the firmware speed field was not independently calibrated. Consequently, the descriptive RMSE difference cannot be
generalized to the full system or interpreted as a robust hardware guarantee.

\FloatBarrier

\section{Conclusion}
\label{sec:conclusions}
This study investigated curvature tracking for a single-segment, six-tendon flexible arm.
The Cartesian curvature vector permits continuous passage
through the straight state, and the explicit tendon relations support a
compatible minimum-norm velocity allocation. The control scheme combines static symmetric error scaling with a cubic time-varying performance-bound design that progressively shrinks the admissible error envelope.
Under the stated ideal conditions and initial
feasibility, the closed loop has boundary invariance, enters the selected
nonzero terminal band by the prescribed time, and converges asymptotically. The in-house Python cases and the internally defined 56-run suite evaluated
using the OpenCR--MuJoCo model provide two complementary layers of numerical
evidence. The separate contact-free example further illustrates the relevance
of the control formulation to generic confined-space branch-alignment tasks.
Gravity enlarged constant-curvature model residuals, twelve runs showed
saturation, and three
showed slack, thereby exposing the boundary of the nominal theorem. The supervised reduced-order experiment provides a distinct hardware
feasibility layer. Across six formal trials, all measured normalized-error
trajectories remained within the prescribed boundary and all trials entered the
terminal band before \(T=8~\mathrm{s}\). The proposed controller showed a
descriptively 32.5\% lower mean terminal curvature RMSE than the matched
baseline, while their mean entry times were nearly identical. With three
repetitions per controller, one-dimensional coordination, local calibration,
and no tension sensing, the result is feasibility evidence rather than
full-system validation. Future work will extend the hardware validation to a full six-tendon platform and two-dimensional curvature tracking, while incorporating gravity and load compensation, direct tension and slack sensing, and contact-aware control. Larger experimental sample sizes will also be considered to provide more comprehensive evidence of repeatability and practical performance.

\appendix
\section{Full six-tendon map and exact inverse}
\label{sec:supp_mapping}
From \eqref{eq:single_tendon_vector}, we obtain
\begin{align}
  q_1&=dLc_x, \nonumber\\
  q_2&=dL\left(\tfrac12c_x+\tfrac{\sqrt3}{2}c_y\right), \nonumber\\
  q_3&=dL\left(-\tfrac12c_x+\tfrac{\sqrt3}{2}c_y\right), \nonumber\\
  q_4&=-dLc_x, \nonumber\\
  q_5&=dL\left(-\tfrac12c_x-\tfrac{\sqrt3}{2}c_y\right), \nonumber\\
  q_6&=dL\left(\tfrac12c_x-\tfrac{\sqrt3}{2}c_y\right).
  \label{eq:supp_expanded_q}
\end{align}
Consequently,
\begin{equation}
  q_4=-q_1,\quad q_5=-q_2,\quad q_6=-q_3,
  \quad q_1-q_2+q_3=0 .
  \label{eq:supp_compatibility}
\end{equation}

The Gram matrix calculation is
\begin{align}
  \mat C\trans\mat C
  &=
  \begin{bmatrix}
    \sum_{i=1}^{6}\cos^2\phi_i&
    \sum_{i=1}^{6}\cos\phi_i\sin\phi_i\\
    \sum_{i=1}^{6}\cos\phi_i\sin\phi_i&
    \sum_{i=1}^{6}\sin^2\phi_i
  \end{bmatrix}\nonumber\\
  &=
  \begin{bmatrix}3&0\\0&3\end{bmatrix}
  =3\mat I_2.
  \label{eq:supp_CtC}
\end{align}
Premultiplication of Eq.~\eqref{eq:six_tendon_mapping} by \(\mat C\trans\) therefore
gives the exact inverse on the pure-bending subspace:
\begin{equation}
  \vect c=\frac{1}{3dL}\mat C\trans\vect q .
  \label{eq:supp_inverse}
\end{equation}
Its two entries are
\begin{align}
  c_x&=
  \frac{
  q_1+\tfrac12q_2-\tfrac12q_3-q_4-\tfrac12q_5+\tfrac12q_6
  }{3dL},
  \label{eq:supp_inverse_cx}\\
  c_y&=
  \frac{\tfrac{\sqrt3}{2}(q_2+q_3-q_5-q_6)}{3dL}.
  \label{eq:supp_inverse_cy}
\end{align}
When the three opposite-pair relations hold, these reduce to
\begin{equation}
  c_x=\frac{2q_1+q_2-q_3}{3dL},
  \quad
  c_y=\frac{q_2+q_3}{\sqrt3\,dL}.
  \label{eq:supp_pair_inverse}
\end{equation}
The factor \(1/(3dL)\) is thus analytical, not a fitted pseudoinverse.

\section{Null space, compatibility, and orthogonal projectors}
\label{sec:supp_null}
Because \(\rank(\mat C)=2\), the six-dimensional actuation space decomposes
as
\begin{equation}
    \R^6=\operatorname{range}(\mat C)
  \mathbin{\oplus}\nullsp(\mat C\trans).
\end{equation}
The orthogonal projectors onto these subspaces are
\begin{equation}
  \mat P_R=\frac13\mat C\mat C\trans,\quad
  \mat P_N=\mat I_6-\frac13\mat C\mat C\trans.
  \label{eq:supp_projectors}
\end{equation}
Using \(\mat C\trans\mat C=3\mat I_2\),
\begin{equation}
    \mat P_R^2
  =\frac19\mat C(\mat C\trans\mat C)\mat C\trans
  =\mat P_R,\quad
  \mat P_R\trans=\mat P_R.
\end{equation}
It follows that \(\mat P_N^2=\mat P_N\),
\(\mat P_N\trans=\mat P_N\),
\(\mat P_R\mat P_N=\mat0\), and
\(\mat C\trans\mat P_N=\mat0\).

One explicit basis of \(\nullsp(\mat C\trans)\) is
\begin{equation}
  \mat N=
  \begin{bmatrix}
    1&0&0&1\\
    0&1&0&-1\\
    0&0&1&1\\
    1&0&0&-1\\
    0&1&0&1\\
    0&0&1&-1
  \end{bmatrix},
  \quad
  \mat C\trans\mat N=\mat0.
  \label{eq:supp_null_basis}
\end{equation}
The first three columns represent equal common-mode changes within each antagonistic tendon pair. The fourth column extends the three-pair vector $\vect n_a=[1,-1,1]\trans$ to the full six-tendon coordinate space. All four directions are algebraically shape-neutral under the two-component curvature model. However, shape neutrality in this model does not imply physical feasibility of the corresponding tendon motions.

For an arbitrary measured vector \(\vect q_m\),
\begin{equation}
  \vect q_m=\mat P_R\vect q_m+\mat P_N\vect q_m,\quad
  \hat{\vect c}=\frac{1}{3dL}\mat C\trans\vect q_m,
  \quad
  \vect r_q=\mat P_N\vect q_m .
  \label{eq:supp_decomposition}
\end{equation}
The representable part is
\(\mat P_R\vect q_m=dL\mat C\hat{\vect c}\), while
\(\vect r_q\) is rejected by the ideal curvature estimator.
An in-range bias \(dL\mat C\vect b_c\) has zero residual and produces the
curvature bias \(\vect b_c\), which is why a small residual is not a complete
model-validation test.

All tendon velocities that give the ideal reference curvature velocity
\(\dot{\vect c}_r\) are
\begin{equation}
  \dot{\vect q}
  =dL\mat C\dot{\vect c}_r+\mat P_N\vect z,
  \quad \vect z\in\R^6.
  \label{eq:supp_general_allocation}
\end{equation}
The two terms are orthogonal. Therefore,
\[
  \norm{\dot{\vect q}}_2^2
  =3d^2L^2\norm{\dot{\vect c}_r}_2^2
   +\norm{\mat P_N\vect z}_2^2,
\]
so setting \(\vect z=\vect0\) is the unique minimum-Euclidean-norm
velocity allocation. The statement concerns the algebraic velocity norm and
does not imply minimum energy, positive tension, or slack prevention.

\section{Explicit tendon and reel commands}
\label{sec:supp_commands}

Define
\begin{align}
  v_x&=\dot c_{d,x}-\frac{\dot B_x}{B_x}e_x+k_xe_x,
  \label{eq:supp_vx}\\
  v_y&=\dot c_{d,y}-\frac{\dot B_y}{B_y}e_y+k_ye_y.
  \label{eq:supp_vy}
\end{align}
Then \(\dot{\vect c}_r=[v_x,v_y]\trans\), and the six entries of
\(\dot{\vect q}=dL\mat C\dot{\vect c}_r\) are
\begin{align}
  \dot q_1&=dLv_x,\nonumber\\
  \dot q_2&=dL\left(\tfrac12v_x+\tfrac{\sqrt3}{2}v_y\right),\nonumber\\
  \dot q_3&=dL\left(-\tfrac12v_x+\tfrac{\sqrt3}{2}v_y\right),\nonumber\\
  \dot q_4&=-dLv_x,\nonumber\\
  \dot q_5&=dL\left(-\tfrac12v_x-\tfrac{\sqrt3}{2}v_y\right),\nonumber\\
  \dot q_6&=dL\left(\tfrac12v_x-\tfrac{\sqrt3}{2}v_y\right).
  \label{eq:supp_six_commands}
\end{align}
They obey
\begin{equation}
    \dot q_4=-\dot q_1,\quad
  \dot q_5=-\dot q_2,\quad
  \dot q_6=-\dot q_3,\quad
  \dot q_1-\dot q_2+\dot q_3=0.
\end{equation}
The total and component rate bounds follow from
\(\mat C\trans\mat C=3\mat I_2\):
\begin{equation}
  \norm{\dot{\vect q}}_2
  =\sqrt3\,dL\norm{\dot{\vect c}_r}_2,
  \quad
  \abs{\dot q_i}\le dL\norm{\dot{\vect c}_r}_2.
  \label{eq:supp_rate_bounds}
\end{equation}

Because \(q_i=l_{i,\mathrm{ref}}-l_i\) and the references are constant,
\begin{equation}
  \dot l_i=-\dot q_i.
  \label{eq:supp_length_sign}
\end{equation}
If reel \(i\) has calibrated effective radius \(r_{s,i}>0\) and measured
winding sign \(\sigma_i\in\{-1,1\}\), a constant-radius interface is
\begin{equation}
  \omega_i=\sigma_i\frac{\dot q_i}{r_{s,i}}.
  \label{eq:supp_motor_speed}
\end{equation}
Both \(r_{s,i}\) and \(\sigma_i\) are calibration quantities. Layered
winding can make the effective radius state dependent, in which case
Eq.~\eqref{eq:supp_motor_speed} must be replaced by a measured nonlinear
conversion. None of these relations enforces positive tendon tension.

\FloatBarrier

\section*{CRediT authorship contribution statement}
\addcontentsline{toc}{section}{CRediT authorship contribution statement}
\textbf{Yi Lu:} Writing– original draft, Methodology. \textbf{Chao Tang:} Writing– review \& editing. \textbf{Zhiji Han:} Writing– original draft, Methodology. \textbf{Hongdu Wang:} Writing– review \& editing.

\section*{Declaration of competing interest}
\addcontentsline{toc}{section}{Declaration of competing interest}
The authors declare that they have no known competing financial interests or personal relationships that could have appeared to influence the work reported in this paper.

\section*{Acknowledgments}
\addcontentsline{toc}{section}{Acknowledgments}
This work was partially supported by the National Natural Science Foundation of China under Grant No. 62503443; the Young Talent of Lifting Engineering for Science and Technology in Shandong under Grant No. SDAST2026QTA092; the Postdoctoral Innovation Program of Shandong Province under Grant No. SDCX-ZG-202501023; the Qingdao Natural Science Foundation under Grant No. 25-1-1-116-zyyd-jch; the Qingdao Postdoctoral Program under Grant No. QDBSH20250102058; and the Fundamental Research Funds for the Central Universities under Grant No. 202513025.

\section*{Data availability}
\addcontentsline{toc}{section}{Data availability}
No data was used for the research described in the article.

\pdfbookmark{References}{}
\bibliographystyle{elsarticle-num-nodoi}
\bibliography{references}

\end{document}